\title{Real-time system optimal traffic routing under uncertainties -- Can physics models boost reinforcement learning?}
\author[1]{Zemian Ke}
\author[1]{Qiling Zou}
\author[1]{Jiachao Liu}
\author[1,2]{Sean Qian\corref{cor1}}
\affiliation[1]{organization={Department of Civil and Environmental Engineering, Carnegie Mellon University},
city={Pittsburgh},
country={USA}}
\affiliation[2]{organization={Heinz College, Carnegie Mellon University},
city={Pittsburgh},
country={USA}}
\newtheorem{theorem}{Theorem}
\newtheorem{remark}{Remark}
\newtheorem{lemma}{Lemma}
\begin{document}

\begin{abstract}
System optimal traffic routing can mitigate congestion by assigning routes for a portion of vehicles so that the total travel time of all vehicles in the transportation system can be reduced. However, achieving real-time optimal routing poses challenges due to uncertain demands and unknown system dynamics, particularly in expansive transportation networks. While physics model-based methods are sensitive to uncertainties and model mismatches, model-free reinforcement learning struggles with learning inefficiencies and interpretability issues. Our paper presents TransRL, a novel algorithm that integrates reinforcement learning with physics models for enhanced performance, reliability, and interpretability. TransRL begins by establishing a deterministic policy grounded in physics models, from which it learns from and is guided by a differentiable and stochastic teacher policy. During training, TransRL aims to maximize cumulative rewards while minimizing the Kullback–Leibler (KL) divergence between the current policy and the teacher policy. This approach enables TransRL to simultaneously leverage interactions with the environment and insights from physics models.  We conduct experiments on three transportation networks with up to hundreds of links. The results demonstrate TransRL's superiority over traffic model-based methods for being adaptive and learning from the actual network data. By leveraging the information from physics models, TransRL consistently outperforms state-of-the-art reinforcement learning algorithms such as proximal policy optimization (PPO) and soft actor-critic (SAC). Moreover, TransRL's actions exhibit higher reliability and interpretability compared to baseline reinforcement learning approaches like PPO and SAC.
\end{abstract}

\maketitle

\section{Introduction}

Traffic congestion in urban areas is one of the plagues of citizen's everyday life and it can cost huge economic loss \cite{jayasooriya2017measuring}. According to a report from Federal Highway Administration \footnote{https://ops.fhwa.dot.gov/aboutus/opstory.htm}(FHWA), ``roughly half of the congestion experienced by Americans happens virtually every day''.  This type of congestion is generated by unbalanced temporal and spatial distribution of traffic activities. Recent studies \cite{Chen2020, ke2023leveraging} have shown that congestion can be significantly mitigated by optimally guiding route choices of a small portion of travelers. The system optimal routing aims to minimize the total system cost by assigning routes for some travelers. There are many ways to practically implement and guide traffic routing. For instance, congestion toll \cite{yang1998departure, chen2018dyetc} changes users' routing choice by changing their perceived cost. Variable traffic signs (VMS)  provide real-time traffic information using road-side signals like LED signs \cite{emmerink1996variable, erke2007effects,arnott1991does,jacob2006automated} to affects users' routing choice. Route guidance and driver information systems (RGDIS) \cite{adler2002cooperative} directly provide route recommendations to users using in-vehicle devices such as infoentertainment system or cell phone applications. However, solving system-level optimal routing in real-time for large networks remains a big challenge. 

The optimal routing in real-time for large networks is challenging for three reasons. First, demands are uncertain. Travel demands stem from human activities that may have randomness, so it is almost unlikely to predict the travel demands accurately. Demands essentially affect traffic conditions, and the difference between the actual demands and the estimated demands can cause nontrivial estimation errors in traffic conditions. For example, \cite{qian2011computing} showed that congestion duration can change significantly with an incremental change in demands. Therefore, it is important to consider the demand uncertainties. From a theoretical perspective, \cite{waller2001evaluation} showed that using expected demands tends to overestimate the network performance in traffic assignment evaluation. \cite{do2012dynamic} also stated it is important to consider demand uncertainty for the congestion pricing problems. In this study, we explicitly include the demand uncertainty and regard demands as random variables that cannot be predicted precisely in advance.

Second, system dynamics may not be modeled perfectly. Though extensive traffic flow models (e.g., the celebrated LWR model \cite{lighthill1955kinematic, richards1956shock}, cell transmission models \cite{daganzo1994cell, daganzo1995cell}, and link queue model \cite{jin2021link}) have been proposed to approximate the system dynamics of transportation networks, no known flow model can precisely replicate the flow in networks consisting of both roads and intersections. Each model is associated with various assumptions and possible parameter estimation errors. For example, conventional kinematic models (\cite{lighthill1955kinematic, richards1956shock, daganzo1994cell, daganzo1995cell, jin2021link}) all assume a deterministic fundamental diagram to depict the relation between density and flow for mathematical tractability, while \cite{qu2017stochastic} states this relation can be better modeled by stochastic models. In addition,  estimation of the deterministic fundamental diagram parameters is not always accurate due to the randomness of traffic flow and transient flow state transitions. In this study, we acknowledge that the traffic flow model cannot be known precisely. Rather, there is a model mismatch between the assumed/adopted traffic models and the actual true system dynamics, and this mismatch is unkonwn in advance.

Third, real-time system optimal traffic routing is challenging especially in large networks due to complex interplay among traffic flow of various origins and destinations. \cite{pi2017stochastic} proposed a stochastic optimal real-time routing method for a two-route network. By abstracting a general commuting network into a two-route network, the analytical solution can be derived and approximated using dynamic programming. However, deriving the analytical solution of optimal traffic routing (i.e. path flow) for a large network is almost infeasible because the complexity of the problem and the dimension of decision variables increase exponentially as the network size. To that end, machine learning methods, like reinforcement learning, could be inefficient and even infeasible. The state space and action space increase exponentially as the network size increases, so finding an optimal policy from the exploded search space without any prior knowledge is inefficient and sometimes even infeasible. To our best knowledge, though \cite{Lazar2021} leveraged reinforcement learning to solve real-time optimal routing on networks with up to 41 links. Real-time optimal routing has not been solved by reinforcement learning on networks with more than hundreds of links. This study experiments on three networks with 2 links, 18 links, and 621 links, respectively.

Current methods for real-time optimal routing or general traffic control can be categorized into heuristic methods, model-based methods, and reinforcement learning-based methods. Heuristic methods use rule-of-thumb or feedback control to correct control errors to maintain desired system states. \cite{kachroo1998solution} formulated the real-time routing problem as a feedback control problem, and a feedback linearization technique is used to achieve a user equilibrium state. \cite{paz2009behavior} adopted a fuzzy control approach to determine real-time routing strategies to improve the network performance. Heuristic methods are interpretable and easy to implement, but they are reactive and only take action when control errors arise. Therefore, their performance is almost necessarily suboptimal.

Model-based methods typically first develop a traffic model or simulation to simulate the system dynamics, and then solve the optimal solutions through either heuristic algorithms (e.g., moving successive averages (MSA)) or deriving analytical closed-form solutions. For example, \cite{toledo2015simulation} used autoregressive integrated moving average (ARIMA) to predict demands, a binary logit model to predict traveler choices, and CTM to predict traffic flows. Based on these predictions, an optimization algorithm was adopted to set tolls to affect routing choices to optimize the objective function. \cite{tan2018hybrid} proposed a model predictive control method for dynamic pricing to reduce the total traveler delay. Similar to \cite{toledo2015simulation}, \cite{tan2018hybrid} used a logit choice model to predict traveler choices and CTM to predict traffic flows. Model-based methods are interpretable and can be proven to be optimal in theory. However, the inevitable model mismatch between the models and the actual system dynamics intrigues their performances in practice.

Reinforcement learning (RL) gained popularity in the traffic control domain because it's model-free and can learn optimal policies directly from data or the environment. \cite{pandey2020deep} considered a practical control scenario with multiple origins and destinations, partially observed network states, and stochastic demands. The problem was formulated as a partially observable Markov decision process (POMDP) and solved using RL. The experiments in \cite{pandey2020deep} showed RL outperformed feedback control. \cite{Lazar2021} considered a scenario where route choices of autonomous vehicles can be fully controlled to improve network efficiency, and the policy was learned using reinforcement learning. The experiments on simplified networks indicated the learned RL policy realized performances on par with the system optimum. However, to our best knowledge, RL has not been tested for optimal routing on large networks with more than hundreds of links. With the well-known curse of dimensionality, whether RL can solve optimal policies in large networks remains unknown.

In this study, the \textbf{reliability and interpretability} of the RL policies are the focus for its realism in real-world traffic operation practice. Most RL policies are not deterministic and the traffic environment is stochastic in nature, and it is clear that the performances of RL policies in practice are subject to variability. When implementing RL policies, one question out of interest is how poor the performance of RL policies can possibly be in the worst cases. In reality, traffic operation may not tolerate a poor performance for just one day, unlike RL applications in other domains, e.g. robotics, that may accept some time of online improvement through learning. In this work, we propose Conditional Travel Time Reduction at Risk (CTTRaR) to measure the reliability of a control method. CTTRaR compares the total travel time of the worst cases with a control method and the total travel time of the no-control scenarios. RL policies are regarded as reliable if the worst performance is still better than the no-control scenarios. In addition, the interpretability of the RL policies is a concern for practitioners. The objective of RL is to maximize the cumulative rewards with no specific constraints for the output actions. As a result, though RL leads to high cumulative rewards, the actions of RL might fluctuate over time, which makes RL policies less interpretable. Our general idea is to constrain the RL policies to not go too far from an interpretable ``teacher'' policy. For example, system optimum routing under traffic flow dynamics models can serve as a teacher policy. Those flow dynamics models may not be precisely calibrated (e.g., in a simulation environment) or the demand is stochastic, but its flow propagation mechanisms and the analytical solution can help to guide and learn interpretable RL policies.

Moreover, it is unclear when RL is superior to model-based methods and vice versa. A model-based method is extensively compared with model-free RL methods to shed light on the performance between model-based methods and learning-based methods. The results indicate RLs outperform the model-based method when the demand uncertainty is large or/and the model mismatch is significant, and vice versa. When the network size is large, RLs sometimes struggle to find a reasonable policy that works precisely for the unknown or unexplored system states and dynamics, but model-based can perform well especially when the model mismatch is insignificant. With the trade-off between the model-based method and model-free RL, one interesting question is can we combine the advantages of both types of methods?

We propose a general reinforcement learning framework that can couple different types of well-established transportation methods (e.g., heuristics, model-based, or machine learning-based) with RL. In this study, we focus on an example of coupling the model-based method with RL. Our proposed reinforcement learning framework differentiates from most previous reinforcement learning algorithms including DDPG \cite{schulman2017proximal}, PPO \cite{schulman2017proximal}, and SAC \cite{Haarnoja2018} by leveraging information from transportation methods, named \textbf{Trans}portation-informed \textbf{R}einforcement \textbf{L}earning (TransRL). TransRL is able to learn from the environment and the traffic model simultaneously. On one hand, even with a model mismatch, the information from the models is not unuseful and can help RL narrow down the search space. On the other hand, with the ability to learn from the environment directly, RL is able to learn a better policy than the model-based method by implicitly correcting the model mismatch between the models and the actual system dynamics.

The contributions of this paper are summarized as follows.
\begin{itemize}
    \item We solve real-time system optimal routing in sizable transportation networks. The model is based on a realistic setting where only a few links are observed and a portion of vehicles can be influenced with their respective routing guidance.
    \item We relax the assumptions of known travel demands and accurate traffic models in other RL models. Instead, travel demands are assumed to follow a time-dependent Gaussian distribution with means of historical average demands. Moreover, there are model mismatches between the offline trained accessible traffic models and the online test (unknown and true) system dynamics, which stem from lack of knowledge, model estimation errors, or unexpected incidents in real world.
    \item Model-free RLs are compared with a traffic model-based method under various levels of model mismatches. This comparison provides insights into when a traffic model-based method is superior and when RLs are preferable.
    \item Ultimately, we proposed a novel RL framework TransRL that combines RL with traffic models. With the ability to learn from the environment and traffic models simultaneously, TransRL learns more efficiently than model-free RLs and is more adaptive than the traffic model-based method. More importantly, the actions of TransRL are more reliable and interpretable than model-free RLs.
    \item Reliability and interpretability of TransRL are the focus. In order to use TransRL in practice, its performance at any stage throughout the process of online learning cannot go below the network performance without any traffic control/management measures. TransRL's policies are also designed to ensure they approximately follow the guidance of model-based optimal flow solutions, which ensure its interpretability. 
\end{itemize}

The rest of the paper is organized as follows. Literature related to vehicle routing and RL-based control methods is discussed in section \ref{sec: related work}. In section \ref{sec: TransRL}, we introduce our framework TransRL and show it can be proved to converge under conditions of finite discrete state and action space. We subsequently elaborate on how TransRL can be trained in continuous state space and action space by using neural networks to approximate value functions and policies. The problem formulation of the real-time optimal routing problems is included in section \ref{sec: problem formulation}. Section \ref{sec: problem formulation} then presents how to solve the optimal routing problems using a model-based method, model-free RL, and the proposed TransRL. The considered methods are compared with various experiment settings on three networks in section \ref{sec: experiments}. Section \ref{sec: conclusion} concludes our findings and suggests potential directions for future work.

\section{Related works}
\label{sec: related work}
Quite a few literatures investigated providing real-time route information to travelers through variable message signs (e.g., \citet{messmer1998automatic, wunderlich2000link}), and in-vehicle routing mechanisms \citep{aerde1988individual}. However, assigning routes or providing route information to improve the network performance is less studied. \cite{adler2005multi} used cooperative distributed multi-agent systems to explore the interactions between route information providers and travelers, and it was found that negotiation between information providers and travelers can improve the network performance. \cite{paz2009behavior} studied affecting traveler routing behaviors via providing real-time routing information, and a fuzzy control approach was proposed to determine the information strategy in order to enhance the network performance. \cite{du2015coordinated} proposed a coordinated online in-vehicle routing scheme with intentional information provision perturbation (CRM-IP), which leverages the bounded rationality of travelers to shape traveler routing behaviors so that the system optimality and user optimality are balanced. \cite{Lazar2021} studied a scenario where route choices of autonomous vehicles can be fully controlled in a centralized manner to improve network performance. The policy learned by RL performed close to a theoretically optimal solution in networks composed of up to dozens of links.

In the general traffic control domain, various analytical methods have been proposed, such as dynamic programming for optimal routing on a 2-route network \cite{pi2017stochastic}, max-pressure for signal control \cite{2022integrating, 2024ped, 2023boosting}, and feedback control for ramp metering \cite{wang2001feedback, zhao2020fuzzy}, to name a few. The analytical methods develop control policies through mathematical derivations on the top of model assumptions. These derived control policies often show desirable properties, such as convergence guarantee (i.e., reliability) and interoperability. However, real environments may change over time. Because of a lack of learning ability, these control policies cannot automatically adapt to changing environments, which deteriorates control performances.

Reinforcement learning algorithms have gained popularity in solving real-time control problems because of the characteristics of being model-free and learnable while capable of taking proactive optimal actions even in uncertain environments \cite{Haydari2022, ParvezFarazi2021}. RL algorithms have been extensively applied to various traffic control problems, including vehicle routing \cite{kullman2022dynamic, Singh2019}, autonomous driving \cite{ye2019automated, zhu2018human}, traffic signal control \cite{prashanth2010reinforcement, li2021network}, ramp metering \cite{yang2019deep}, variable speed limit control \cite{ke2020enhancing}, and congestion pricing \cite{pandey2020deep}. RL used in transportation can be categorized into model-free RL (e.g., \cite{li2021network, Haydari2022}) and RL with prior knowledge (e.g., \cite{ke2020enhancing, Han2022, Han2022a, Chen2022, Bai2022}). Model-free RL learns from scratch by interacting with the environment, so the RL agent explores the search space by mostly taking random actions during the early stage of the training. As a consequence, the performance of model-free RL is not guaranteed and is even worse than the non-control case before the convergence. This is particularly problematic in the traffic operation domain, because any traffic management measure cannot afford to under-perform under general public's expectation, even for just a few days. In other words, practically it would be impossible to take a number of days before RL starts to show system benefits. Our paper can help address this issue by guiding RL online learning with policies derived from physics models. 

Compared with model-free RL, RL with prior knowledge is more data-efficient by utilizing prior knowledge to narrow down the search space or start with a policy better than random initialization. \cite{ke2020enhancing} utilized transfer learning to transfer a policy trained on a source scenario to multiple target scenarios. Though the fundamental diagrams in the target scenarios differed from those in the source scenario, the transfer learning significantly shortened the training process in the target scenarios. \cite{Han2022, Han2022a} augmented field data using traffic flow models. From the augmented data, RL was trained offline, and then the learned policies were implemented to acquire new field data. This process repeated such that RL kept learning from the environment. \cite{Bai2022} designed a hybrid reinforcement learning framework that combines a rule-based strategy and reinforcement learning to optimize the eco-driving strategy. Most of the time, the vehicle is controlled by RL policies. When the stop warning is activated, the rule-based strategy will take control and ensure the vehicle stops safely. \cite{Chen2022} proposed an RL-based framework for end-to-end autonomous driving. While learning to behave optimally, the proposed framework also learns a latent environment model that predicts the state of the environment in a low-dimensional latent space. The latent model greatly reduces the sample complexity by learning the latent states of the high-dimensional observations. Though RL has been extensively studied in engineering problems (e.g., \cite{2020QL, 2020MARL, 2023RL}), the actions of RL policies are not constrained to improve the reliability and interpretability of the RL policies.


\section{TransRL}
\label{sec: TransRL}
\subsection{Preliminaries for RL}
The problem studied in RL can be formulated as a Markov decision process (MDP). The state space and the action space are denoted as $\mathcal{S}$ and $\mathcal{A}$. After applying an action $a_t \in \mathcal{A}$ at current state $s_t \in \mathcal{S}$, the next state $s_{t+1} \in \mathcal{S}$ of the environment is determined by a state transition probability function $p(s_{t+1} \mid s_t, a_t)$, and a reward $r_t$ is produced by the environment. The return is the discounted sum of rewards in the whole horizon $\sum_t \gamma^t r_t$, where $\gamma$ is a discount factor making a trade-off between short-term rewards and long-term rewards. A policy $\pi (a_t \mid s_t)$ determines the probability distribution of actions $a_t$ given a state $s_t$. We denote state-action marginals induced by a policy $\pi (a_t \mid s_t)$ as $\rho_\pi (s_t, a_t)$. 

The objective of reinforcement learning is to find an optimal policy $\pi^*$ such that the expected return is maximized.
\begin{align}
\label{equation: RL objective}
    \pi^* = \arg \max_\pi \mathbb{E}_{(s_t, a_t)\sim \rho_\pi} \left[\sum_t \gamma^t r_t\right]
\end{align}

This study focuses on actor-critic methods \cite{fujimoto2018addressing, lillicrap2015continuous}, which are a combination of Q-learning \cite{watkins1989learning} and policy optimization. Generally, actor-critic methods use a critic network parameterized as $\theta$ to estimate the expected return of state-action pairs with a policy $\pi$ (i.e., Q-values $Q_\theta^\pi(s_t, a_t)$), and an actor network parameterized as $\phi$ to output actions given states (i.e., $\pi_\phi (a_t \mid s_t)$).

Both the actor network and the critic network are trained using data from an experience replay buffer containing transitions $(s_t, a_t, s_{t+1}, r_t)$. The actor network $\pi_\phi (a_t \mid s_t)$ is updated by gradient descent with a loss function based on Q-values.
\begin{align}
    J_\pi (\phi) =  \mathbb{E}_{a_t \sim \pi_\phi} \left[ - Q_\theta(s_t, a_t)  \right] 
\end{align}
The critic network is updated by the temporal difference error based on the Bellman equation, and the loss function for the critic network is given by

\begin{align}
    J_Q (\theta)  =  \left[ Q_\theta (s_t, a_t) - (r_t + \gamma \left( Q_{\bar{\theta}}(s_{t+1}, a_{t+1}) \right) ) \right]^2, a_{t+1} \sim \pi_{\bar{\phi}} (s_{t+1})
\end{align}

where $Q_{\bar{\theta}}$ and $\pi_{\bar{\phi}}$ are a target critic network and a target actor network respectively, which are used to stabilize the training. The actor and critic networks are updated at every learning step, while the target actor and critic networks are updated periodically by copying the weights of the actor and critic networks respectively.

\subsection{TransRL models}
Our proposed reinforcement learning framework differentiates from most previous reinforcement learning algorithms (e.g., DDPG \cite{schulman2017proximal}, PPO \cite{schulman2017proximal}, and SAC \cite{Haarnoja2018}) by leveraging information from transportation methods, named \textbf{Trans}portation-informed \textbf{R}einforcement \textbf{L}earning (TransRL). First, a neural network is used to learn a stochastic and differentiable teacher policy from a well-established transportation method. Then, during the training or testing of TransRL, aside from interactions with the environment, TransRL also learns from the teacher policy by comparing the current policy with the teacher policy. The divergence between the two policies, combined with rewards, is passed to TransRL for learning. The whole process is illustrated in Figure \ref{fig:concept}.

\begin{figure}[H]
    \centering
    \includegraphics[width=\textwidth]{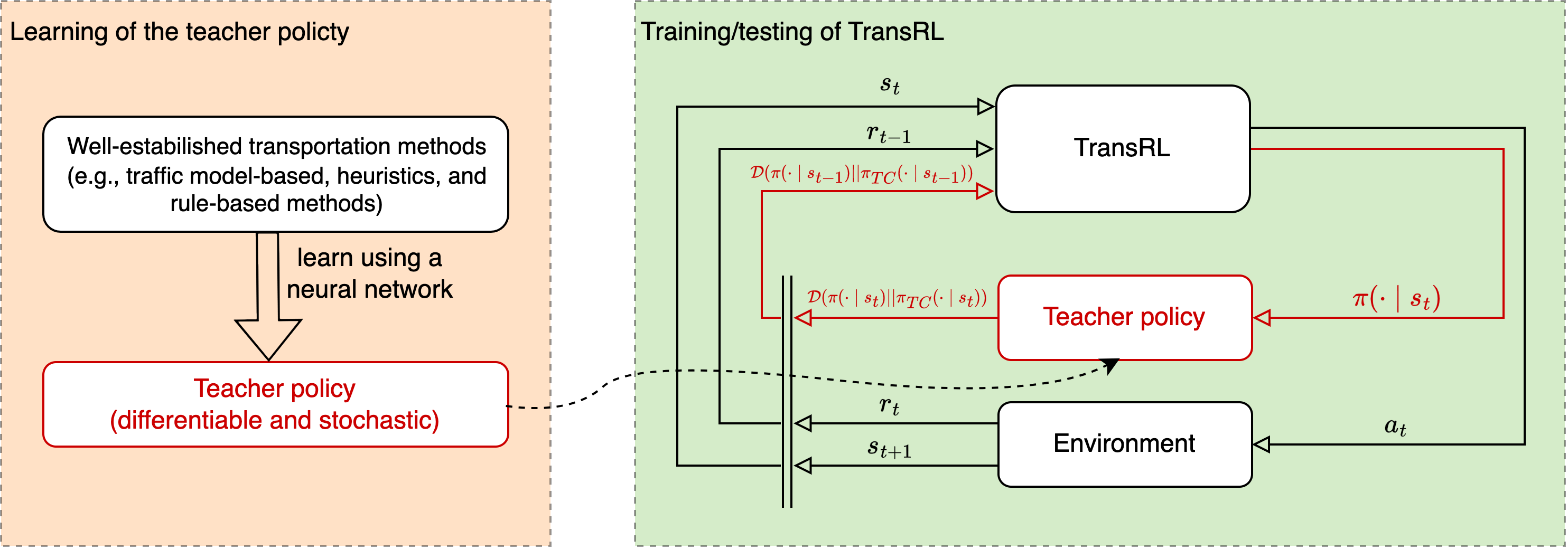}
    \caption{TransRL concept diagram}
    \label{fig:concept}
\end{figure}

\subsubsection{Teacher policy}
\label{sec: teacher policy}
We assume there exist well-established transportation models and methods that can output actions. The choice of transportation domain models can be very flexible, and it can be traffic model-based, heuristics, or rule-based methods. Compared with general learning-based methods, which work like a black box, transportation domain models are more reliable and interpretable as they incorporate domain knowledge and physics information that would guide the RL agent to learn effectively.

Then, we develop a stochastic and differentiable teacher policy based on a transportation domain model. The teacher policy is a probability distribution for actions given a state. The action distribution concentrates on the action output by the transportation domain model. The concentration level of the distribution can be tuned by an unreliability parameter $\sigma$. Figure \ref{fig: unreliability parameter concept} shows a simple case with an action dimension of 1 and the range of action is $[0, 1]$ where the action represents the portion of vehicles to be diverted from the most preferred route. As Figure \ref{fig: unreliability parameter concept} shows, a smaller unreliability parameter leads to a distribution more concentrated on the action from the deterministic transportation domain model. Essentially, the teacher policy is a prior distribution of actions for states. We will provide an example of a teacher policy in the later section \ref{sec: TransRL for routing}.

\begin{figure}[H]
    \centering
    \includegraphics[width=0.8\textwidth]{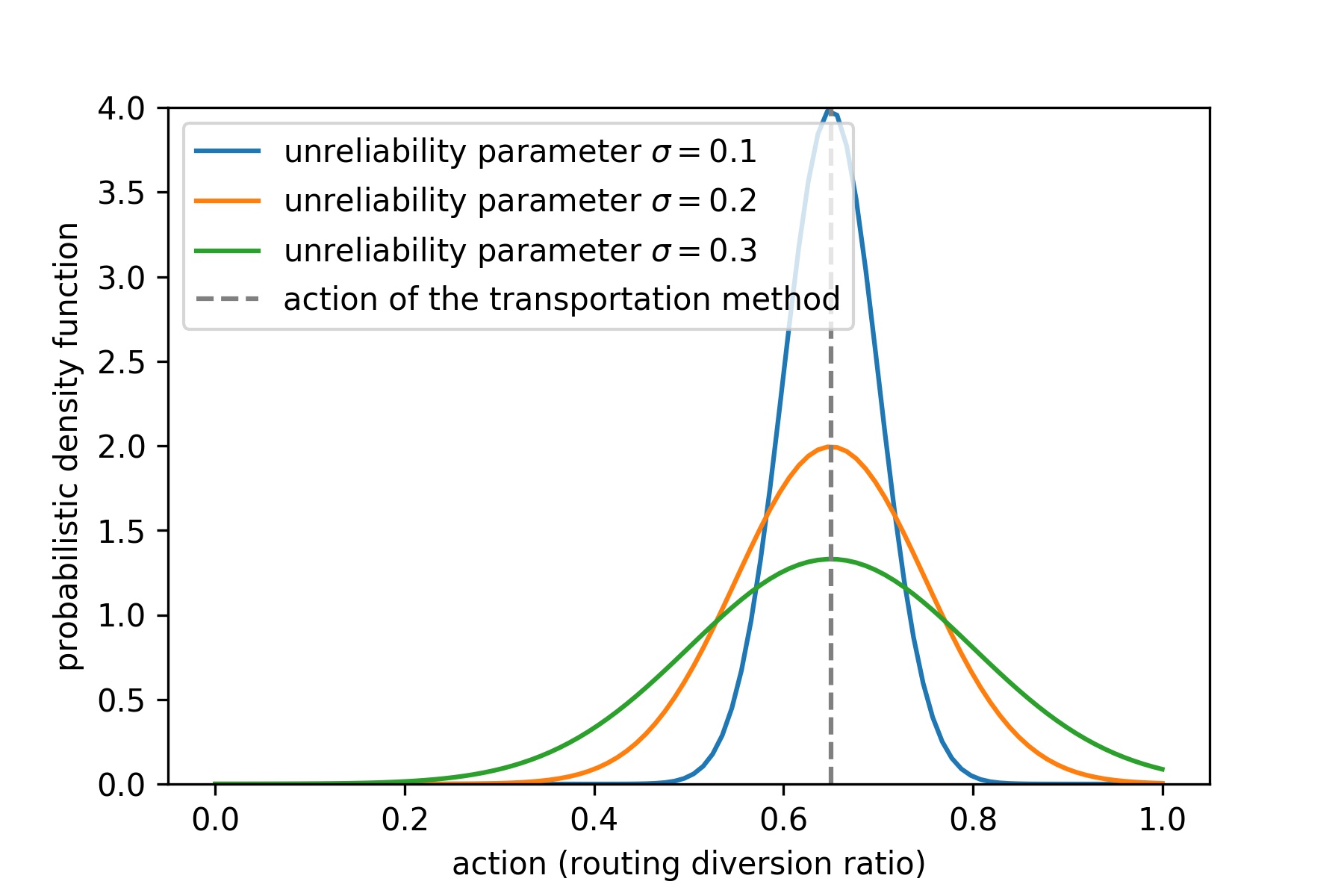}
    \caption{The teacher policy}
    \label{fig: unreliability parameter concept}
\end{figure}

The benefits of making the teacher policy stochastic and differentiable are three-fold. First, the unreliability parameter enables tuning how much the TransRL agent relies on the transportation domain model. Also, a stochastic teacher policy instructs the TransRL agent how explorative it should be. A teacher policy with concentrated action distribution results that the TransRL agent explores less and concentrates on actions near the action derived from transportation domain models, while a teacher policy with a more flat distribution encourages the TransRL agent to explore more on the whole action space. At last, a differentiable teacher policy enables the gradient propagation, which makes possible the TransRL learning from the teacher policy.

\subsubsection{The augmented objective function}
To enable TransRL's learning from the teacher policy, the objective of TransRL is to maximize the cumulative rewards while minimizing the differences between the divergence between the learned policy and the teacher policy. Therefore, the objective function of TransRL is different from the objective function of most previous reinforcement learning algorithms as equation \eqref{equation: RL objective} shows. This novel objective fundamentally changes the exploration behaviors and the learning process of the reinforcement learning agent. Specifically, the objective function of TransRL is given by
\begin{equation}
    J(\pi) = \mathbb{E}_{(s_t, a_t)\sim \rho_\pi} \left[ \sum_t \gamma^t \left( r_t - \alpha \mathcal{D} \left(\pi (\cdot \mid s_t) || \pi_{TC} (\cdot \mid s_t) \right) \right) \right]
\end{equation}
where $\alpha$ is the temperature parameter that makes a trade-off between the reward and the divergence term. The choice of divergence function is relatively flexible. Without loss of generality, we adopt the Kullback–Leibler (KL) divergence. Then, the augmented objective becomes
\begin{align}
 J(\pi) & =  \mathbb{E}_{(s_t, a_t)\sim \rho_\pi} \left[ \sum_t \gamma^t \left( 
r_t - \alpha D_{KL} \left( \pi (\cdot \mid s_t) || \pi_{TC} (\cdot \mid s_t) \right)
 \right) \right] \\
 \label{eq: TransRL obj}
  & =  \mathbb{E}_{(s_t, a_t)\sim \rho_\pi} \left[ \sum_t \gamma^t \left( 
r_t + \alpha \mathcal{H} \left( \pi (\cdot \mid s_t)  \right) + \alpha \mathbb{E}_{a \sim  \pi (\cdot \mid s_t) } \left[ \log \pi_{TC} (a \mid s_t) \right]
 \right) \right] 
\end{align}

\begin{theorem}
\label{theorem: SAC and TransRL}
The maximum entropy reinforcement learning SAC \cite{Haarnoja2018} is equivalent to TransRL with a teacher policy of uniform distribution.
\end{theorem}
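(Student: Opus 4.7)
The plan is to substitute the uniform teacher policy directly into the TransRL objective in equation \eqref{eq: TransRL obj} and show that the cross-entropy term between the learned policy and the teacher policy collapses to an action-independent constant, leaving exactly the SAC maximum-entropy objective up to an additive constant that does not affect the $\arg\max$ over $\pi$.

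Concretely, I first set $\pi_{TC}(\cdot \mid s_t)$ to be the uniform distribution over the action space $\mathcal{A}$, so that $\pi_{TC}(a \mid s_t) = 1/|\mathcal{A}|$ in the discrete case, or $\pi_{TC}(a \mid s_t) = 1/\mathrm{Vol}(\mathcal{A})$ for a bounded continuous action space (this is where a small technicality needs to be noted, since uniformity is only well-defined on a set of finite measure; I would state the theorem assuming $\mathcal{A}$ has finite measure, which matches the bounded routing action spaces considered in the paper). Taking logs, $\log \pi_{TC}(a \mid s_t) = c$ for a constant $c$ that depends neither on $a$ nor on $s_t$.

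Next, substituting into the inner expectation gives
\begin{equation*}
\alpha\,\mathbb{E}_{a \sim \pi(\cdot \mid s_t)}\!\left[\log \pi_{TC}(a \mid s_t)\right] = \alpha c,
\end{equation*}
and summing over time yields $\alpha c \sum_t \gamma^t = \alpha c/(1-\gamma)$, which is independent of $\pi$. The TransRL objective therefore simplifies to
\begin{equation*}
J(\pi) = \mathbb{E}_{(s_t,a_t)\sim\rho_\pi}\!\left[\sum_t \gamma^t\!\left(r_t + \alpha\,\mathcal{H}(\pi(\cdot \mid s_t))\right)\right] + \frac{\alpha c}{1-\gamma},
\end{equation*}
which matches the SAC maximum-entropy objective up to the additive constant $\alpha c/(1-\gamma)$. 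Since $\arg\max_\pi$ is invariant under additive constants, the optimal policies and the induced Bellman-style updates coincide, establishing the claimed equivalence.

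I do not expect a serious technical obstacle; the main subtlety is just stating the theorem so that the uniform teacher policy is well-defined (finite-measure $\mathcal{A}$), and noting that ``equivalent'' means equal up to an objective-level additive constant, hence identical optimizers and identical policy-gradient directions. If desired, one can also verify equivalence at the level of the soft Bellman operator by observing that adding the constant $\alpha c$ to the per-step reward only shifts all soft Q-values by $\alpha c/(1-\gamma)$, which leaves the greedy and soft-policy improvement steps unchanged.
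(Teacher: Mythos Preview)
Your proposal is correct and follows essentially the same approach as the paper: substitute the uniform teacher policy into the TransRL objective \eqref{eq: TransRL obj}, observe that the cross-entropy term $\mathbb{E}_{a\sim\pi}[\log\pi_{TC}(a\mid s_t)]$ becomes a $\pi$-independent constant, and conclude that what remains is exactly the SAC maximum-entropy objective. You are more explicit than the paper about the finite-measure requirement for uniformity and about the additive constant, but the argument is the same.
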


\begin{proof}
     If $\pi_{TC}$ is a uniform distribution, $\pi_{TC} (a \mid s_t)$ is a constant regardless of $a$, so $\mathbb{E}_{a \sim  \pi (\cdot \mid s_t) } \left[ \log \pi_{TC} (a \mid s_t) \right]$ is also a constant regardless of $\pi$. As a result, we can remove $\mathbb{E}_{a \sim  \pi (\cdot \mid s_t) } \left[ \log \pi_{TC} (a \mid s_t) \right]$ from the objective function \eqref{eq: TransRL obj}, and then it becomes
\begin{align}
 J(\pi) =  \mathbb{E}_{(s_t, a_t)\sim \rho_\pi} \left[ \sum_t \gamma^t \left( 
r_t + \alpha \mathcal{H} \left( \pi (\cdot \mid s_t)  \right)
 \right) \right] 
\end{align}
which is the objective of SAC \cite{Haarnoja2018}.
\end{proof}

\begin{remark}
Including the KL divergence term in the objective results that TransRL becomes more reliable and interpretable than classical RL. With this divergence constraint, the policy $\pi$ does not go too far away from the teacher policy $\pi_{TC}$ that is assumed to be reliable and interpretable.
\end{remark}

\subsubsection{Policy iteration}
The optimal policy is solved using policy iteration. The policy iteration includes, (1) the policy evaluation step where the value functions are estimated given a policy, and (2) the policy improvement step where the policy is updated to increase the values given the value functions. By repeating the policy evaluation and the policy improvement, the policy is guaranteed to converge to one of the optimal policies that maximize values.

TransRL uses auxiliary Q-values according to the augmented objective function such that maximizing auxiliary Q-values is equivalent to maximizing the augmented objective. The auxiliary Q-values are given by
\begin{align}
\label{eq: auxiliary Q value definition}
    Q(s_t, a_t) = r_t + \gamma \mathbb{E}_{s_{t+1} \sim p} \left[ V(s_{t+1}) \right]
\end{align}
where
\begin{align}
    V(s_{t+1}) = \mathbb{E}_{a_{{t+1}} \sim \pi} \left[ Q(s_{t+1}, a_{t+1}) + \alpha \log \frac{\pi_{TC} (a_{t+1} \mid s_{t+1})}{\pi (a_{t+1} \mid s_{t+1})} \right]
\end{align}
is the auxiliary state value. 

For the policy evaluation, The auxiliary Q-values are updated iteratively by a modified Bellman equation
\begin{equation}
\label{eq: Bellman}
    Q^{k+1}(s_t, a_t) \leftarrow r_t + \gamma \mathbb{E}_{s_{t+1} \sim p} \left[ V^{k}(s_{t+1}) \right]
\end{equation}
With the above update rule, the auxiliary Q-values can be proved to converge to the unique auxiliary Q-values of the policy $\pi$ as follows.
\begin{lemma}[Policy evaluation convergence]
\label{lemma: policy evaluation}
    Starting with a initial auxiliary Q-values function $Q^0: \mathcal{S} \times \mathcal{A} \rightarrow \mathbb{R}$ with $|\mathcal{A}| < \infty $, update $Q^0$ iteratively using equation \eqref{eq: Bellman}. As $k \rightarrow \infty$, $Q^k$ converges to the unique auxiliary Q-values function of $\pi$.
\end{lemma}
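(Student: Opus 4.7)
The plan is to follow the classical contraction-mapping argument used for policy evaluation in maximum-entropy reinforcement learning (see Haarnoja et al.~\cite{Haarnoja2018}), adapted to the KL-to-teacher regularizer in TransRL. First I would substitute the definition of $V^k$ into \eqref{eq: Bellman} so that the iteration is expressed as a single operator $\mathcal{T}^\pi$ acting on $Q$:
\begin{equation*}
\mathcal{T}^\pi Q(s_t, a_t) \;=\; r_t + \gamma\, \mathbb{E}_{s_{t+1} \sim p,\; a_{t+1} \sim \pi}\!\left[ Q(s_{t+1}, a_{t+1}) + \alpha \log \frac{\pi_{TC}(a_{t+1}\mid s_{t+1})}{\pi(a_{t+1}\mid s_{t+1})}\right].
\end{equation*}

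The key observation is that the log-ratio term does not depend on the iterate $Q$, so it can be folded into an augmented reward
\begin{equation*}
r_\pi(s_t, a_t) \;=\; r_t + \gamma\,\alpha\, \mathbb{E}_{s_{t+1} \sim p,\; a_{t+1} \sim \pi}\!\left[\log \tfrac{\pi_{TC}(a_{t+1}\mid s_{t+1})}{\pi(a_{t+1}\mid s_{t+1})}\right],
\end{equation*}
so that $\mathcal{T}^\pi Q(s_t,a_t) = r_\pi(s_t,a_t) + \gamma\, \mathbb{E}_{s_{t+1}\sim p,\, a_{t+1}\sim\pi}[Q(s_{t+1},a_{t+1})]$ is the ordinary Bellman operator for policy $\pi$ on an MDP whose reward is $r_\pi$. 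Under the finite-action hypothesis $|\mathcal{A}|<\infty$, and because the teacher policy from Section \ref{sec: teacher policy} is constructed to have full support, the log-ratio and hence $r_\pi$ are uniformly bounded.

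Next I would verify that $\mathcal{T}^\pi$ is a $\gamma$-contraction in the supremum norm. Because the log-ratio term cancels in the difference, for any two bounded $Q_1, Q_2$,
\begin{equation*}
\bigl| \mathcal{T}^\pi Q_1(s,a) - \mathcal{T}^\pi Q_2(s,a) \bigr| \;\le\; \gamma\, \mathbb{E}_{s' \sim p,\, a' \sim \pi}\bigl[\bigl| Q_1(s',a') - Q_2(s',a')\bigr|\bigr] \;\le\; \gamma\, \lVert Q_1 - Q_2 \rVert_\infty.
\end{equation*}
Taking the supremum over $(s,a)$ gives the contraction with modulus $\gamma<1$. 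Banach's fixed-point theorem, applied to the complete metric space of bounded functions on $\mathcal{S}\times\mathcal{A}$ under the sup-norm, then delivers a unique fixed point $Q^\pi$ and geometric convergence $Q^k \to Q^\pi$; by construction $Q^\pi$ satisfies \eqref{eq: auxiliary Q value definition} and is therefore the auxiliary Q-value function of $\pi$.

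The main obstacle is purely technical: ensuring $r_\pi$ is well defined and bounded. The expected log-ratio diverges whenever $\pi$ places mass outside the support of $\pi_{TC}$, so a short argument is needed that combines the finiteness of $\mathcal{A}$ with the full-support, smooth design of $\pi_{TC}$ (and a parallel regularity on $\pi$) to guarantee finite KL uniformly in $s$. Beyond that, the proof is a direct reduction to standard Bellman-operator convergence.
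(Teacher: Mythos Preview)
Your proposal is correct and follows essentially the same approach as the paper: both fold the log-ratio term into an augmented reward, observe that $|\mathcal{A}|<\infty$ makes this reward bounded, and then reduce to the standard contraction-mapping argument for ordinary policy evaluation. Your version is somewhat more explicit (writing out the $\gamma$-contraction inequality and flagging the full-support requirement on $\pi_{TC}$), but the underlying idea is identical.
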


\begin{proof}
    Let's define an augmented reward
    \begin{equation}
    \label{eq: augmented reward}
        r_t^\pi (s_t, a_t) = r_t(s_t, a_t) + \gamma \alpha \mathbb{E}_{a_{t+1} \sim \pi} \left[ \log \frac{\pi_{TC} (a_{t+1} \mid s_{t+1})}{\pi (a_{t+1} \mid s_{t+1})} \right]
    \end{equation}
    Then, equation \eqref{eq: Bellman} becomes
    \begin{equation}
    \label{eq: Bellman 2}
        Q^{k+1}(s_t, a_t) \leftarrow r_t^\pi (s_t, a_t) + \gamma \mathbb{E}_{s_{t+1} \sim p, a_{t+1} \sim \pi} \left[ Q^k (s_{t+1}, a_{t+1}) \right]
    \end{equation}
    As $|\mathcal{A}| < \infty$, the KL divergence in equation \eqref{eq: augmented reward} is bounded, so $r_t^\pi (s_t, a_t)$ is bounded. Then, one can apply the contraction mapping theorem to prove $Q^k$ converge to the unique auxiliary Q-values, which is the same as the proof for standard reinforcement learning \cite{sutton1998reinforcement}.
\end{proof}

For the policy improvement, the policy is updated such that the auxiliary Q-values of the new policy are higher than the old policy. Similar to \cite{Haarnoja2018}, we use a KL divergence between the old policy and the exponential of the auxiliary Q function, and the new policy is obtained by minimizing the KL divergence as follows.

\begin{align}
\label{eq: policy improvement}
    \pi_{new} = \arg \min_{\pi \in \Pi} D_{KL} \left( \pi(\cdot \mid s_t) \mid \mid \frac{\pi_{TC} (\cdot \mid s_t)\exp{(Q^{\pi_{old}} (s_t, \cdot)/\alpha)}}{\mathcal{Z}^{\pi_{old}}(s_t)} \right)
\end{align}
where $\mathcal{Z}^{\pi_{old}}(s_t)$ is the partition function ensuring the right part within the KL divergence is a probability distribution. Though $\mathcal{Z}^{\pi_{old}}(s_t)$ is intractable and infeasible for large state spaces, it is a constant and can be ignored when calculating gradients with respect to the policy. With this policy update rule, we can prove the new policy is better than the old policy with respect to auxiliary Q-values as follows.

\begin{lemma}[Policy improvement convergence]
\label{lemma: policy improvement}
    For $\pi_{old} \in \Pi$, $\pi_{new}$ is the optimal solution of the problem defined in equation \eqref{eq: policy improvement}. With $|\mathcal{A}| < \infty$, $Q^{\pi_{new}} (s_t, a_t) \geq Q^{\pi_{old}} (s_t, a_t), \forall (s_t, a_t) \in \mathcal{S} \times \mathcal{A}$.
\end{lemma}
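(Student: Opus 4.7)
The plan is to mirror the soft policy improvement argument in Haarnoja et al.~\cite{Haarnoja2018}, adapted to our teacher-augmented setting. The core observation is that the partition function $\mathcal{Z}^{\pi_{old}}(s_t)$ depends only on $s_t$, so it acts as a constant inside the $\arg\min$ over $\pi$; hence the optimality of $\pi_{new}$ directly yields an inequality on a quantity that will, after a bit of algebra, look like an auxiliary state value.

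First, I would write out the optimality condition. Since $\pi_{old}\in\Pi$ is a feasible competitor to $\pi_{new}$, we have
\begin{equation*}
  D_{KL}\!\left(\pi_{new}(\cdot\mid s_t)\,\Big\|\,\tfrac{\pi_{TC}(\cdot\mid s_t)\exp(Q^{\pi_{old}}(s_t,\cdot)/\alpha)}{\mathcal{Z}^{\pi_{old}}(s_t)}\right)
  \le
  D_{KL}\!\left(\pi_{old}(\cdot\mid s_t)\,\Big\|\,\tfrac{\pi_{TC}(\cdot\mid s_t)\exp(Q^{\pi_{old}}(s_t,\cdot)/\alpha)}{\mathcal{Z}^{\pi_{old}}(s_t)}\right).
\end{equation*}
Expanding both KL divergences, the $\log\mathcal{Z}^{\pi_{old}}(s_t)$ terms cancel because they do not depend on $a_t$, and after multiplying through by $\alpha$ and rearranging I would obtain
\begin{equation*}
  \mathbb{E}_{a_t\sim\pi_{new}}\!\left[Q^{\pi_{old}}(s_t,a_t) + \alpha\log\tfrac{\pi_{TC}(a_t\mid s_t)}{\pi_{new}(a_t\mid s_t)}\right]
  \;\ge\;
  \mathbb{E}_{a_t\sim\pi_{old}}\!\left[Q^{\pi_{old}}(s_t,a_t) + \alpha\log\tfrac{\pi_{TC}(a_t\mid s_t)}{\pi_{old}(a_t\mid s_t)}\right] = V^{\pi_{old}}(s_t),
\end{equation*}
where the last equality uses the definition of the auxiliary state value.

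Second, I would combine this pointwise inequality with the Bellman equation \eqref{eq: Bellman} for $\pi_{old}$. Writing $Q^{\pi_{old}}(s_t,a_t)=r_t+\gamma\,\mathbb{E}_{s_{t+1}\sim p}[V^{\pi_{old}}(s_{t+1})]$ and substituting the bound on $V^{\pi_{old}}(s_{t+1})$ gives
\begin{equation*}
  Q^{\pi_{old}}(s_t,a_t) \;\le\; r_t + \gamma\,\mathbb{E}_{s_{t+1}\sim p,\,a_{t+1}\sim\pi_{new}}\!\left[Q^{\pi_{old}}(s_{t+1},a_{t+1})+\alpha\log\tfrac{\pi_{TC}(a_{t+1}\mid s_{t+1})}{\pi_{new}(a_{t+1}\mid s_{t+1})}\right].
\end{equation*}
Iterating this inequality forward, the right-hand side is exactly one step of the Bellman backup $\mathcal{T}^{\pi_{new}}$ applied to $Q^{\pi_{old}}$. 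Repeated application yields $Q^{\pi_{old}}(s_t,a_t)\le (\mathcal{T}^{\pi_{new}})^k Q^{\pi_{old}}(s_t,a_t)$ for every $k$. By Lemma~\ref{lemma: policy evaluation} (using $|\mathcal{A}|<\infty$ so the augmented reward is bounded and $\mathcal{T}^{\pi_{new}}$ is a $\gamma$-contraction), the right-hand side converges to $Q^{\pi_{new}}(s_t,a_t)$, which delivers the desired inequality for all $(s_t,a_t)$.

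The main obstacle I anticipate is bookkeeping rather than conceptual: verifying that the log-ratio term involving $\pi_{TC}$ stays finite so the iterated backups are well-defined, and arguing the monotone-plus-bounded convergence cleanly. Finiteness of $\mathcal{A}$ is the crucial hypothesis, since it lets me bound $\log(\pi_{TC}/\pi)$ uniformly (assuming policies with full support on $\mathcal{A}$, as is standard in the soft actor-critic lineage) and thereby invoke the same contraction argument used in the proof of Lemma~\ref{lemma: policy evaluation}. Everything else is a direct transcription of the SAC improvement proof with $\log\pi_{TC}$ replacing the entropy-only term.
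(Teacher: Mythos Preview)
Your proposal is correct and follows essentially the same route as the paper: use the optimality of $\pi_{new}$ against the competitor $\pi_{old}$ to obtain the value-type inequality (after the $\log\mathcal{Z}^{\pi_{old}}$ cancellation), then substitute into the Bellman equation for $Q^{\pi_{old}}$ and iterate. The only cosmetic difference is that you phrase the iteration as repeated application of $\mathcal{T}^{\pi_{new}}$ and invoke Lemma~\ref{lemma: policy evaluation} for convergence, whereas the paper simply unrolls the inequality with ellipses; the substance is identical.
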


\begin{proof}
    Denote the KL divergence term in equation \eqref{eq: policy improvement} as
    \begin{equation}
        J_{\pi_{old}} (\pi) = D_{KL} \left( \pi(\cdot \mid s_t) \mid \mid \frac{\pi_{TC} (\cdot \mid s_t)\exp{(Q^{\pi_{old}} (s_t, \cdot)/\alpha)}}{\mathcal{Z}^{\pi_{old}}(s_t)} \right)
    \end{equation}
    Then, as $\pi_{new}$ is a minimizer of the KL divergence, we have
    \begin{align}
        J_{\pi_{old}} (\pi_{old}) &\geq J_{\pi_{old}} (\pi_{new}) \\
        \begin{split}
            \Rightarrow \; \mathbb{E}_{a_t \sim \pi_{old}} & \left[\log \pi_{old} (a_t \mid s_t) - \log \pi_{TC} (a_t \mid s_t) - Q^{\pi_{old}}(a_t, s_t) / \alpha + \log \mathcal{Z}^{\pi_{old}} (s_t)  \right] \\
            & \geq \mathbb{E}_{a_t \sim \pi_{new}} \left[\log \pi_{new} (a_t \mid s_t) - \log \pi_{TC} (a_t \mid s_t) - Q^{\pi_{old}}(a_t, s_t) / \alpha + \log \mathcal{Z}^{\pi_{old}} (s_t)  \right]  \\
            \Rightarrow \; \mathbb{E}_{a_t \sim \pi_{old}} & \left[Q^{\pi_{old}}(a_t, s_t) + \alpha \log \frac{\pi_{TC} (a_t \mid s_t)}{\pi_{old} (a_t \mid s_t)}  \right] \\
            & \leq \mathbb{E}_{a_t \sim \pi_{new}} \left[Q^{\pi_{old}}(a_t, s_t) + \alpha \log \frac{\pi_{TC} (a_t \mid s_t)}{\pi_{new} (a_t \mid s_t)}  \right]
        \end{split}\\
        \label{eq: value inequality}
       \Rightarrow \; V^{\pi_{old}} (s_t) & \leq \mathbb{E}_{a_t \sim \pi_{new}} \left[Q^{\pi_{old}}(a_t, s_t) + \alpha \log \frac{\pi_{TC} (a_t \mid s_t)}{\pi_{new} (a_t \mid s_t)}  \right]
    \end{align}
    Now, follow equation \eqref{eq: auxiliary Q value definition} to replace $Q^{\pi_{old}}$ with a formulation of $V^{\pi_{old}}$, and then apply the inequality \eqref{eq: value inequality}. By conducting these two steps alternatively, we obtain
    \begin{align}
        Q^{\pi_{old}} (s_t, a_t) & = r(s_t, a_t) + \gamma \mathbb{E}_{s_{t+1} \sim p} \left[V^{\pi_{old}}(s_{t+1}) \right] \\
        & \leq r(s_t, a_t) + \gamma \mathbb{E}_{s_{t+1} \sim p} \left[\mathbb{E}_{a_t+1 \sim \pi_{new}} \left[Q^{\pi_{old}}(a_{t+1}, s_{t+1}) + \alpha \log \frac{\pi_{TC} (a_{t+1} \mid s_{t+1})}{\pi_{new} (a_{t+1} \mid s_{t+1})}  \right] \right] \\
        & \cdots  \\
        & \leq r(s_t, a_t) + \gamma \mathbb{E}_{s_{t+1} \sim p} \left[V^{\pi_{new}}(s_{t+1}) \right] \\
        & = Q^{\pi_{new}} (s_t, a_t)
    \end{align}
    
\end{proof}

The policy iteration alternates between the policy evaluation and the policy improvement. Finally, we can prove the policy iteration will converge to an optimal policy that maximizes the auxiliary Q-values as follows.
\begin{theorem} [Policy iteration convergence]
    With $|\mathcal{A}| < \infty$, starting from any policy $\pi^0 \in \Pi$, conduct the policy evaluation and the policy improvement iteratively. The policy will converge to an optimal policy $\pi^*$ such that $Q^{\pi^*} (s_t, a_t) \geq Q^\pi (s_t, a_t), \forall (s_t, a_t) \in \mathcal{S} \times \mathcal{A} , \pi \in \Pi$.
\end{theorem}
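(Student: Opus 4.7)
The plan is to mirror the soft policy iteration convergence argument of Haarnoja et al.\ (2018), now exploiting the two lemmas already established for the augmented objective. First I would construct the sequence generated by alternating policy evaluation and policy improvement: starting from $\pi^0\in\Pi$, use the modified Bellman operator \eqref{eq: Bellman} together with Lemma \ref{lemma: policy evaluation} to obtain $Q^{\pi^0}$ in the limit, then define $\pi^{1}$ as the minimizer in \eqref{eq: policy improvement}, and iterate to produce $\{\pi^k\}_{k\geq 0}$ with associated auxiliary Q-values $\{Q^{\pi^k}\}_{k\geq 0}$.

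Next I would establish monotonicity and boundedness of this sequence. Lemma \ref{lemma: policy improvement} immediately yields $Q^{\pi^{k+1}}(s_t,a_t)\geq Q^{\pi^{k}}(s_t,a_t)$ pointwise for all $(s_t,a_t)\in\mathcal{S}\times\mathcal{A}$, so $\{Q^{\pi^k}\}$ is monotonically non-decreasing. Because rewards are bounded, $|\mathcal{A}|<\infty$ forces the KL term in the augmented reward \eqref{eq: augmented reward} to be bounded, and $\gamma\in(0,1)$, the geometric series in the definition of the auxiliary Q-values \eqref{eq: auxiliary Q value definition} furnishes a uniform upper bound. Hence the monotone sequence $\{Q^{\pi^k}\}$ converges pointwise to some limit $Q^{\pi^*}$, with an accompanying limit policy $\pi^*$.

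Then I would verify that $\pi^*$ is globally optimal. Since $\pi^*$ is a fixed point of the improvement operator in \eqref{eq: policy improvement}, it minimizes $J_{\pi^*}(\pi)$ over $\pi\in\Pi$, so $J_{\pi^*}(\pi^*)\leq J_{\pi^*}(\pi)$ for every $\pi\in\Pi$. Expanding this inequality exactly as in the derivation of \eqref{eq: value inequality} in Lemma \ref{lemma: policy improvement}, but with $\pi_{new}$ replaced by an arbitrary competitor $\pi\in\Pi$ and $\pi_{old}$ replaced by $\pi^*$, produces
\[ V^{\pi^*}(s_t) \; \geq \; \mathbb{E}_{a_t\sim\pi}\!\left[ Q^{\pi^*}(s_t,a_t) + \alpha\log\frac{\pi_{TC}(a_t\mid s_t)}{\pi(a_t\mid s_t)} \right]. \]
Substituting this bound into the Bellman recursion \eqref{eq: auxiliary Q value definition} for $Q^{\pi^*}$, and then iterating the substitution exactly as in the telescoping chain at the end of Lemma \ref{lemma: policy improvement}, gives $Q^{\pi^*}(s_t,a_t)\geq Q^{\pi}(s_t,a_t)$ for all state-action pairs and all $\pi\in\Pi$, which is the claim.

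The hard part will be the final telescoping/limit step: one must justify that iterating the substitution ad infinitum actually yields $V^{\pi}$ on the right-hand side, which requires the uniform bound on the augmented reward (supplied by $|\mathcal{A}|<\infty$) so that the discounted tail $\gamma^n\mathbb{E}[Q^{\pi^*}(s_{t+n},a_{t+n})]$ vanishes as $n\to\infty$. A secondary subtlety is ensuring the minimizer in \eqref{eq: policy improvement} is attained in $\Pi$ at every step so that the fixed-point characterization of $\pi^*$ is legitimate; the finite action space together with continuity of the KL objective in $\pi$ handles this essentially for free.
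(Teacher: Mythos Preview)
Your proposal is correct and follows essentially the same route as the paper: monotonicity of $\{Q^{\pi^k}\}$ from Lemma~\ref{lemma: policy improvement}, boundedness from bounded augmented rewards, convergence to a fixed point $\pi^*$ satisfying $J_{\pi^*}(\pi^*)\leq J_{\pi^*}(\pi)$, and then reusing the telescoping argument of Lemma~\ref{lemma: policy improvement} to conclude $Q^{\pi^*}\geq Q^{\pi}$. The paper's proof is terser and omits the technical caveats you flag (tail vanishing, attainment of the minimizer), but the skeleton is identical.
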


\begin{proof}
    According to Lemma \ref{lemma: policy improvement}, at each policy iteration step, we have $Q^{\pi_{new}} \geq Q^{\pi_{old}}$. As the augmented rewards are bounded, $Q^{\pi}$ is also bounded. As a result, $Q^{\pi}$ will converge to a certain point, where both the auxiliary Q-values and the policy converge, denoted as $Q^{\pi^*}$. Then, we get $J_{\pi^*} (\pi^*) \leq J_{\pi^*} (\pi), \forall \pi \in \Pi$. Similar to the proof of Lemma \ref{lemma: policy improvement}, we have $Q^{\pi^*} (s_t, a_t) \geq Q^\pi (s_t, a_t), \forall (s_t, a_t) \in \mathcal{S} \times \mathcal{A} , \pi \in \Pi$.
\end{proof}

Note the above policy iteration process only works for tabular cases where the state space and the action space are discrete. In more general cases where state variables and actions can be continuous. We can use approximations (e.g., neural networks) to approximate auxiliary Q-values. In addition, at the policy evaluation or the policy improvement step, running until convergence is computationally expensive. In the next section, we will introduce how TransRL is trained in practice, which is suitable for a continuous state and action space.

\subsubsection{Training of TransRL}
\label{traning of TransRL}
To accommodate large continuous state space and action space, we use neural networks to approximate the auxiliary Q-values/Q-function and the policy. The auxiliary Q-function is parameterized by the neural network parameters $\theta$, which is denoted as $Q_\theta (s_t, a_t)$. Similarly, the parameterized policy is $\pi_\phi (a_t \mid s_t)$, where $\phi$ are the parameters of the policy neural network.

For the Q-function network, the parameters are updated to minimize the Bellman residual. More specifically, the loss of the Q-function network is given by:

\begin{align}
    J_Q (\theta)  = \mathbb{E}_{s_t, a_t \sim \mathcal{D}} \left[ \left( Q_\theta (s_t, a_t) - \hat{Q}(s_t, a_t) \right)^2 \right]
\end{align}

where
\begin{align}
\label{Q value target}
    \hat{Q}(s_t, a_t) = r_t + \gamma \left( Q_{\bar{\theta}}(s_{t+1}, a_{t+1}) + \alpha \log \frac{\pi_{TC} (a_{t+1} \mid s_{t+1})}{\pi_\phi (a_{t+1} \mid s_{t+1})} \right), \, a_{t+1} \sim \pi_\phi (\cdot \mid s_{t+1})
\end{align}

where $Q_{\bar{\theta}}$ is the target Q-function whose parameters are exponentially moving average of parameters of $Q_{\theta}$. The use of target networks is able to stabilize the training process \cite{mnih2015human}. Then, the Q-function network is trained via stochastic gradient descent methods with gradient

\begin{align}
\label{Q function gradient}
    \nabla_\theta J_Q(\theta) = 2 \nabla_\theta Q_\theta (s_t, a_t) \left( Q_\theta (s_t, a_t) - \hat{Q}(s_t, a_t) \right)
\end{align}

The policy network is trained by minimizing the KL divergence in equation \eqref{eq: policy improvement}.

\begin{align}
    J_\pi (\phi) = \mathbb{E}_{s_t \sim \mathcal{D}} \left[  \mathbb{E}_{a_t \sim \pi_\phi} \left[ \log \pi_\phi (a_t \mid s_t) - \log \pi_{TC} (a_t \mid s_t) - Q_\theta(s_t, a_t)/\alpha  \right] \right]
\end{align}

To enable back-propagation in updating $\phi$, the reparameterization trick is used to generate action. 


\begin{align}
    a_t =f_\phi (\epsilon_t; s_t)
\end{align}
where $\epsilon_t$ is independent random noise sampled from a Gaussian distribution. Afterwards, the loss function of the policy network becomes

\begin{align}
    J_\pi (\phi) = \mathbb{E}_{s_t \sim \mathcal{D}, \epsilon_t \sim \mathcal{N} }  \left[ \log \pi_\phi (f_\phi (\epsilon_t; s_t) \mid s_t) - \log \pi_{TC} (f_\phi (\epsilon_t; s_t) \mid s_t) - Q_\theta(s_t, f_\phi (\epsilon_t; s_t))/\alpha  \right] 
\end{align}

Finally, the parameters $\phi$ can be updated via stochastic gradient methods with the gradient of

\begin{align}
\label{policy gradient}
    \nabla_\phi J_\pi (\phi) =\nabla_\phi \log \pi_\phi (a_t \mid s_t) + \left( \nabla_{a_t} \log \pi_\phi (a_t \mid s_t) -  \nabla_{a_t} \log \pi_{TC} (a_t \mid s_t) - \nabla_{a_t} Q_\theta (s_t, a_t) \right) \nabla_\phi f_\phi (\epsilon_t; s_t)
\end{align}

To mitigate the overestimation bias in the Q-values estimation, we use the clipped double Q-learning trick in our practical algorithm \cite{fujimoto2018addressing}. Specifically, we use two Q-functions $Q_{\theta_i}$ that are trained independently, and two target Q-functions $Q_{\bar{\theta}_{i}}$. Therefore, the target Q-function in equation \eqref{Q value target} is the minimum of the two target Q-functions, and the Q-function in equation \eqref{policy gradient} is the minimum of the two Q-functions.






\section{The real-time system optimal traffic routing problems}
\label{sec: problem formulation}
Suppose a morning commuting scenario on a general road network with vertices, $v \in \mathcal{V}$, and links, $l \in \mathcal{L}$. The origins (O) of travel demands are $g \in \mathcal{G}$, and the destinations (D) are $ e \in \mathcal{E}$. The set of paths from an origin $g$ to a destination $e$ is denoted as $\mathcal{P}^{ge}$. For small networks, the paths between $g$ and $e$ can be enumerated, so the path set $\mathcal{P}^{ge}$ contains all non-cyclic paths $p_i^{ge}$ between $g$ and $e$. For large networks, the paths cannot be enumerated, so $\mathcal{P}^{ge}$ contains a small portion of all possible paths. For example, each path set $\mathcal{P}^{ge}$ includes k shortest or most frequently used paths of each OD pair, or paths varying only by major alternative routes. The demands are assumed to be time-dependent and stochastic. The whole time horizon is divided into multiple time intervals, and each time interval is denoted by $t$. The demand of OD pair $ge$ during time interval $t$ is $q_t^{ge}$, which follows a Gaussian distribution $q_t^{ge} \sim \mathcal{N} (\mu(q_t^{ge}), \sigma(q_t^{ge}))$. 

The environment $\mathcal{M}$ includes two components: (1) the road network, including the topology of the network, and the link properties (i.e., lengths, the numbers of lanes, free flow speeds, and capacities); and (2) the traffic flow dynamics, which is Cell Transmission Model (CTM) \cite{daganzo1994cell, daganzo1995cell} and node models \cite{nie2010solving} in our study. CTM and node model essentially determine how the environment state evolves given the current state, input demands, and control actions.

The objective of real-time system optimal traffic routing problems \cite{pi2017stochastic} is to assign all vehicles to paths so that the total travel time of all travelers is minimized. Since the travel demands are stochastic, we cannot expect the total number of vehicles to be assigned, nor the number of vehicles for each path in the path set. Therefore, the control actions in our formulation are the assignment ratios across paths. The assignment ratio of path $p_i^{ge}$ between OD pair $ge$ during time interval $t$ is denoted as $\kappa_{i, t}^{ge}$ and $\sum_i \kappa_{i,t}^{ge} = 1$. At the beginning of each time interval $t$, the control algorithm decides the path ratios $\kappa_{i, t}^{ge}$, so there are $q_t^{ge} \kappa_{i, t}^{ge}$ vehicles to be routed to the path $p_i^{ge}$ during time interval $t$. The control algorithm aims at minimizing the total travel time of all travelers within the whole time horizon by determining path ratios.

Reliability or risk of control policies is also an important metric when implementing system optimal routing in the field. In traffic operations, a question of high interest is how bad the worst cases during RL online process are compared with the User Equilibrium (i.e., no-control) scenario. On top of the well-known risk measure Conditional Value at Risk (CVaR) which was originally proposed in finance \cite{acerbi2002expected} and has been popular in safe RL recently \cite{chan2019measuring}, we propose a risk measure Conditional Travel Time Reduction at Risk (CTTRaR) for travel time reduction problems. CVaR for total travel time is given by
\begin{equation}
    \text{CVaR}_{x} (TTT) = \mathbb{E}[TTT \mid TTT \geq VaR_{x}(TTT)]
\end{equation}
where $VaR_x(TTT)$ is the $x$-quantile of the distribution of total travel time ($TTT$). Then, CTTRaR is
\begin{equation}
    \text{CTTRaR}_{x}(TTT) = \text{CVaR}_{x} (TTT_{UE}) - \text{CVaR}_{x} (TTT)
\end{equation}
where $\text{CVaR}_{x} (TTT_{UE})$ is the CVaR of $TTT$ in the UE (no-control) scenario. Generally, CTTRaR measures, in the worst performance cases, how much total travel time the control method can reduce compared with the no-control scenario. CTTRaR focuses on the left-most tail of the distribution.

\subsection{Approach 1: deterministic system optimal dynamic traffic assignment as a transportation domain model}
\label{sec: model-based method}
To solve the above real-time system optimal routing problems, one approach is to solve an optimal solution offline by leveraging traffic models and estimated demands. First, a traffic model, denoted as $\mathcal{\Tilde{M}}$, is built to simulate the real environment $\mathcal{M}$. The travel demands, denoted as $\mathbf{q}$, are estimated based on historical data and the estimated demands are denoted as $\mathbf{\Tilde{q}}$. Then, one can solve a system-optimal dynamic traffic assignment (SODTA) on top of the traffic model and the estimated demands. Eventually, the solved SODTA can suggest optimal path flows for each time interval. During the online experiments, the vehicles are assigned with paths according to the pre-calculated path ratios.

In this study, we implement the solution algorithm in \cite{qian2012system} to solve SODTA. First, the path marginal cost is approximated, so the system-optimal paths can be identified. Then, the well-known method of successive averages (MSA) can be applied to solve SODTA iteratively. The policy obtained by solving SODTA offline is denoted as pre-DSO.

The performance of pre-DSO depends on the difference between $\mathcal{\Tilde{M}}$ and $\mathcal{M}$ and the estimation error of $\mathbf{\Tilde{q}}$. On if the traffic model and the estimated demands perfectly align with the real case (i.e., $\mathcal{\Tilde{M}} = \mathcal{M}$ and $\mathbf{\Tilde{q}} = \mathbf{q}$), pre-DSO is the optimal solution to the routing problems. This is clearly not viable in practice. Thus, we must consider a model mismatch between $\mathcal{M}$ and $\mathcal{\Tilde{M}}$, and uncertain demands varying from estimated demand. As a result, pre-DSO is no longer guaranteed to be optimal, and possibly far from being optimal.

\subsection{Approach 2: model-free reinforcement learning}

The real-time optimal traffic routing problem can be formulated into an MDP. Then, the MDP can be solved by standard reinforcement learning with continuous action space (e.g., PPO \cite{schulman2017proximal} and SAC \cite{Haarnoja2018}). Here we introduce how the essential elements of the MDP are defined for the real-time routing problem.

\textbf{State}. States of the network or environment can be defined as
\begin{equation}
    s_t = [u_t^l], \forall l \in \mathcal{L}
\end{equation}
where $u_t^l$ is the number of vehicles passing link $l$ during time interval $t$. 

\textbf{Observation}. In realistic scenarios, the states of the whole road network cannot be accessed. Instead, only a portion of links is observed through sensors, and the set of observed links is $\mathcal{\hat{L}}$. More specifically, sensors collect real-time speeds and flows and send the averages during the last time interval at the beginning of the current time interval. Then, the control algorithm takes these real-time data (i.e., observations) as input and accordingly output actions. Therefore, observations of the environment are defined as
\begin{equation}
    o_t = [t, f_t^l, d_t^l], \forall l \in \mathcal{\hat{L}}
\end{equation}
where $f_t^l$ and $d_t^l$ are average flow and speed during time interval $t$ on link $l$. $t$ is included as an analogy of time of the day.

\textbf{Action}. In our formulation, the control algorithm determines the path assignment ratios, so actions are path ratios as follows.
\begin{equation}
    a_t = [\kappa_{i,t}^{ge}] , \forall p_i^{ge} \in \mathcal{P}^{ge}, g \in \mathcal{G}, e \in \mathcal{E}
\end{equation}

\textbf{Reward}. The control objective of the real-time system optimal routing problems is to minimize the total travel time of all vehicles during the whole time horizon as follows.
\begin{align}
    \min_{a_1, a_2, \cdots, a_t, \cdots} \eta \sum_t \sum_l u_t^l
\end{align}
where $\eta$ is the length of each time interval. If the reward $r_t = - \eta \sum_l u_t^l$ with $\gamma=1$, the control objective is equivalent to the objective of reinforcement learning as follows.

\begin{align}
   \min_{a_1, a_2, \cdots, a_t, \cdots} \eta \sum_t \sum_l u_t^l \iff \max _{a_1, a_2, \cdots, a_t, \cdots} \sum_t \gamma^t r_t
\end{align}
However, we do not assume $u_t^l$ is accessible for all links. An alternative and more practical way is to use the number of vehicles leaving the controlled road network during each time interval, which can be retrieved by monitoring the periphery of the controlled road network. The number of leaving/finished vehicles at the end of time interval $t$ is denoted as $F_t$ and the number of vehicles within the area at the beginning of the whole time horizon is denoted as $N_0$. Now, we have
\begin{align}
    \min_{a_1, a_2, \cdots, a_t, \cdots} & \eta \sum_t \sum_l u_t^l \\
    \iff  \min_{a_1, a_2, \cdots, a_t, \cdots} & \eta \sum_t \left( N_0 + \sum_{g} \sum_e q_t^{ge} - F_t \right) \\
    \iff \max_{a_1, a_2, \cdots, a_t, \cdots} & \sum_t F_t
\end{align}
Based on the above reasoning, the reward function is given as
\begin{equation}
    r_t \equiv \frac{F_t - \bar{F}_t}{N}
\end{equation}
where $\bar{F}_t$ is the average number of finished vehicles in the past and $N$ is a constant that approximately scales the reward into a range of [-10, 10]. We found the usage of $\bar{F}_t$ improved the training of reinforcement learning as it normalizes rewards along time intervals, and the usage of $N$ accelerated the training process as the reward scale is constrained into a small range and thereby quicker convergence in Q-values.

\subsection{Approach 3: TransRL}
\label{sec: TransRL for routing}

TransRL incorporates the traffic model $\Tilde{\mathcal{M}}$ in section \ref{sec: model-based method} into RL by developing a teacher policy based on the model-based policy pre-DSO, and then learns from the teacher policy. Specifically, pre-DSO can output an action $\tau(s)$ given a state $s$. We ignore the subscript $t$ here for readability.

$\tau(s)$ is not differentiable with regard to $s$ as the pre-DSO policy is solved iteratively using an intractable algorithm. This prohibits gradient propagation during training. Therefore, a differentiable neural network $\mu(s)$ is leveraged to imitate the pre-DSO policy by minimizing the mean squared loss between $\tau(s)$ and $\mu(s)$.

Then, the teacher policy $\pi_{TC}$ can be defined as a Multivariate normal distribution

\begin{equation}
    a_{TC} \sim  \pi_{TC}(\cdot \mid s) = \mathcal{N} (\mu(s), \Sigma)
\end{equation}
where the mean vector $\mu(s)$ is approximately equal to $\tau(s)$, and $\Sigma$ is assumed to be a diagonal matrix
\begin{equation}
\Sigma = 
\begin{bmatrix}
 \sigma & & & 0\\
 & \sigma\\
 & &  \ddots\\
 0 & & &  \sigma
\end{bmatrix}
\end{equation}

where $\sigma$ is the unreliability parameter to tune how much TransRL should rely on the model-based policy pre-DSO. Implicitly, as the unreliability parameter increases, TransRL is less dependent on the traffic model $\Tilde{\mathcal{M}}$. If the unreliability parameter is infinity, the action distribution is almost a uniform distribution, which means the information from the model is not used at all.

\section{Experiments}
\label{sec: experiments}
We conduct extensive experiments on three networks, including an abstractive two-route network, a synthetic network, and a large network with hundreds of links. In all three networks, we assume the demands are uncertain and follow Gaussian distributions $q_t^{qe} \sim \mathcal{N}(\Tilde{q}_t^{qe}, \beta \Tilde{q}_t^{qe})$, where $\Tilde{q}_t^{qe}$ is the historical average demand and $\beta$ is a parameter of demand uncertainty level. In the synthetic network, we assume a model mismatch between the traffic model $\Tilde{\mathcal{M}}$ and the real system dynamics $\mathcal{M}$, and the model mismatch stems from the parameter estimation errors (i.e., inaccurate free-flow speeds and critical densities). In the large network, the model mismatch is caused by unexpected incidents that block lanes. 

Our proposed algorithm TransRL is compared with various routing strategies and baselines, which are summarized as follows.
\begin{itemize}
    \item Model-based method: \textbf{pre-DSO}. This is the policy obtained in section \ref{sec: model-based method}, namely through  System Optimal Dynamic Traffic Assignment. 
    \item Model-free reinforcement learning: \textbf{PPO} \cite{schulman2017proximal}. PPO is a state-of-the-art policy optimization algorithm. PPO updates policy by maximizing the advantages while using a clip operation to make sure the updated policy does not go far from the old policy.
    \item Model-free reinforcement learning: \textbf{SAC} \cite{Haarnoja2018}. SAC is a state-of-the-art model-free reinforcement learning algorithm. SAC uses value networks to evaluate state-action pairs and policy networks to output actions. The most notable attribute of SAC is the entropy term in the objective. SAC aims at maximizing returns while maintaining a policy as stochastic as possible, which enables SAC to explore the environment sufficiently and avoid being stuck at a local optimum.
    \item No-control baseline: User Equibrlium\textbf{(UE)}. In the large network where some count and speed data are available, we conducted an dynamic OD estimation, and the path ratios from the OD estimation were applied and regarded as the UE baseline.
\end{itemize}

\subsection{An abstractive network with uncertain demands}
The first network is the abstractive 2-route network in \cite{pi2017stochastic}. The 2-route network is an abstraction of a general network during morning peak hours (see Figure \ref{subfig: 2-route net}). After abstraction, the network connects the residential neighborhoods (the origin) and the downtown (the destination) through two routes: (1) route 1 is the main corridor with high free-flow speed but with limited capacity, and (2) route 2 is the aggregation of all local streets with low free-flow speed but with sufficient capacity. The action is the ratio of demands choosing route 1.

During morning peak hours, typical demands increase and then decrease as Figure \ref{subfig: 2-route net demands} shows. At each time step, the demand follows a Gaussian distribution $q_t \sim \mathcal{N} (\mu_t, \beta \mu_t)$, where $\mu_t$ is the historical mean demand at time $t$ and $\beta$ is the demand uncertainty parameter. In our experiments, we considered three scenarios, including a low demand uncertainty scenario ($\beta = 0.05$), a medium demand uncertainty scenario ($\beta = 0.10$), and a high demand uncertainty scenario ($\beta = 0.20$).

\begin{figure}[H]
     \centering
     \begin{subfigure}[b]{0.45\textwidth}
         \centering
         \includegraphics[width=\textwidth]{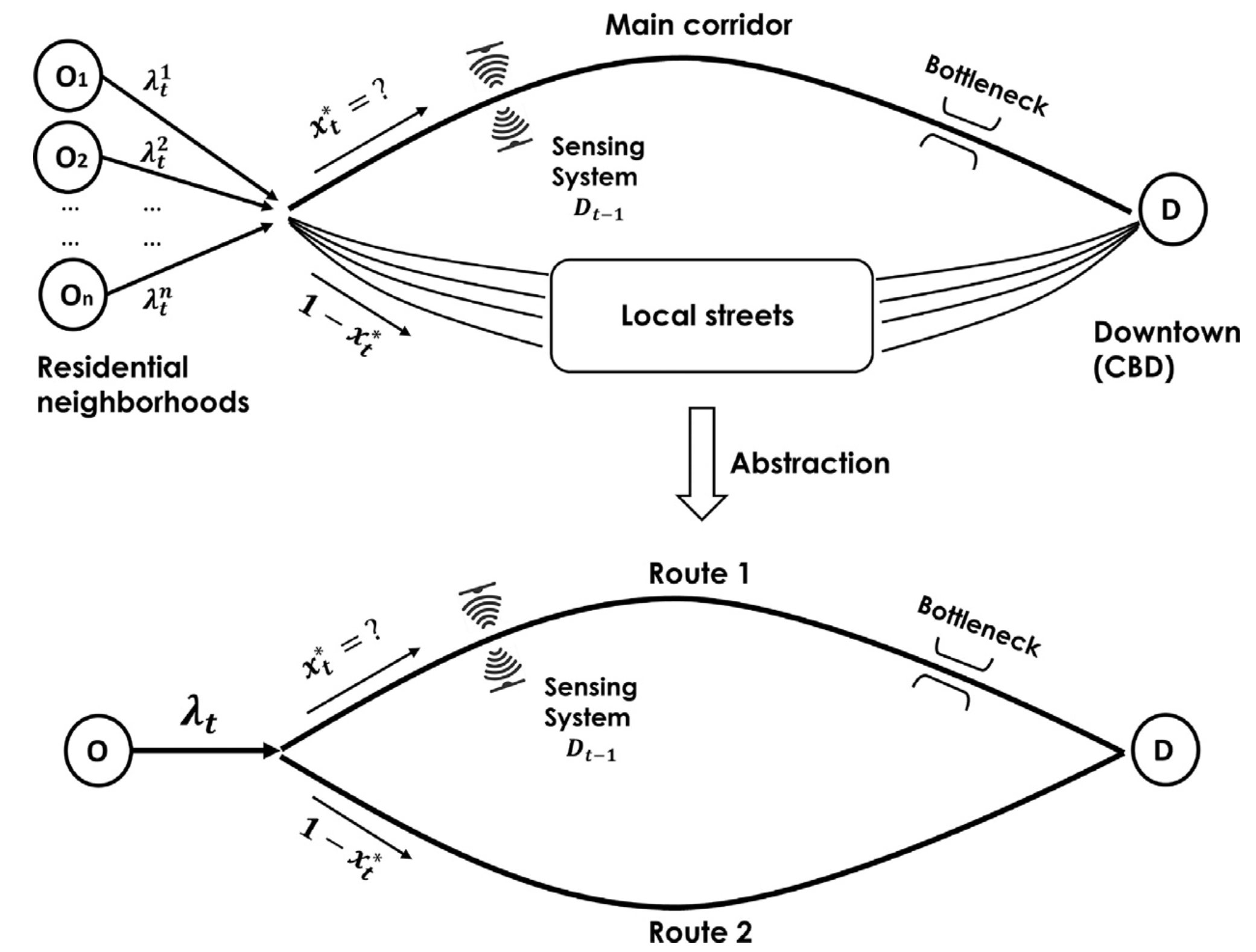}
         \caption{The abstractive 2-route network in \cite{pi2017stochastic}}
         \label{subfig: 2-route net}
     \end{subfigure}
     \begin{subfigure}[b]{0.45\textwidth}
         \centering
         \includegraphics[width=\textwidth]{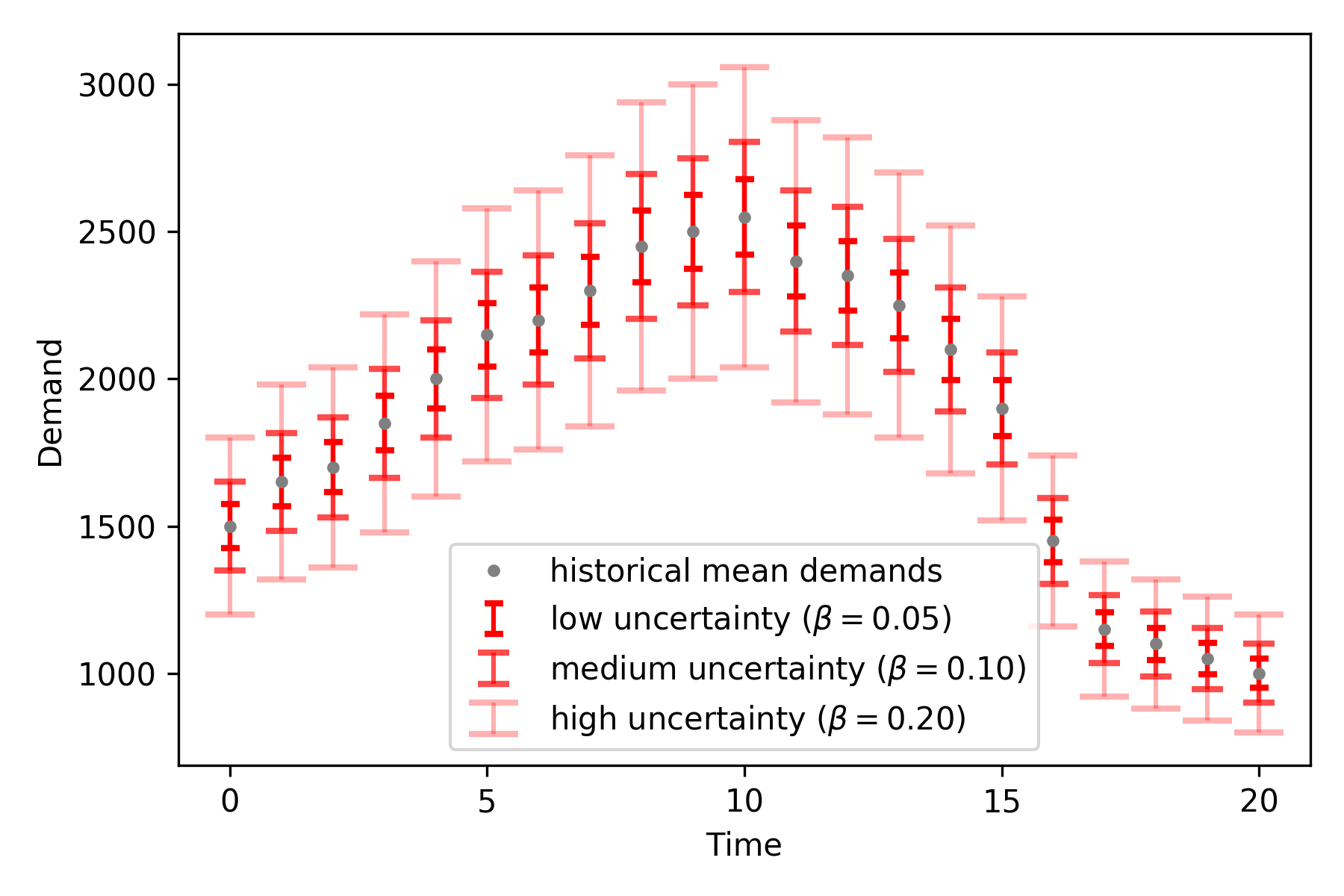}
         \caption{Demand distributions with different uncertainties}
         \label{subfig: 2-route net demands}
     \end{subfigure}
        \caption{The abstractive network and demands}
        \label{fig: 2-route network}
\end{figure}

\subsubsection{Training of reinforcement learning}
TransRL, PPO, and SAC were trained on all three scenarios with the same observation definition, action definition, and reward function. For comparison, the hyperparameters of each RL are the same in three scenarios. The training process is plotted in Figure \ref{fig: training on the 2-route network}. There are two main findings. First, among all three scenarios, TransRL converged quicker than PPO and SAC, while even after convergence, returns of TransRL were higher than PPO and SAC. This indicates that TransRL is more efficient in finding a good policy and this policy is better than those found by PPO and SAC. In addition, the differences between the three RL decrease as the demand uncertainty level increases. This is reasonable. As the demand uncertainty level increases, the actions from the pre-DSO are less close to the optimal actions because the demand estimation is less accurate. As a result, the information from the teacher policy is less informative when the demand uncertainty level is larger.

\begin{figure}[H]
     \centering
     \begin{subfigure}[b]{0.49\textwidth}
         \centering
         \includegraphics[width=\textwidth]{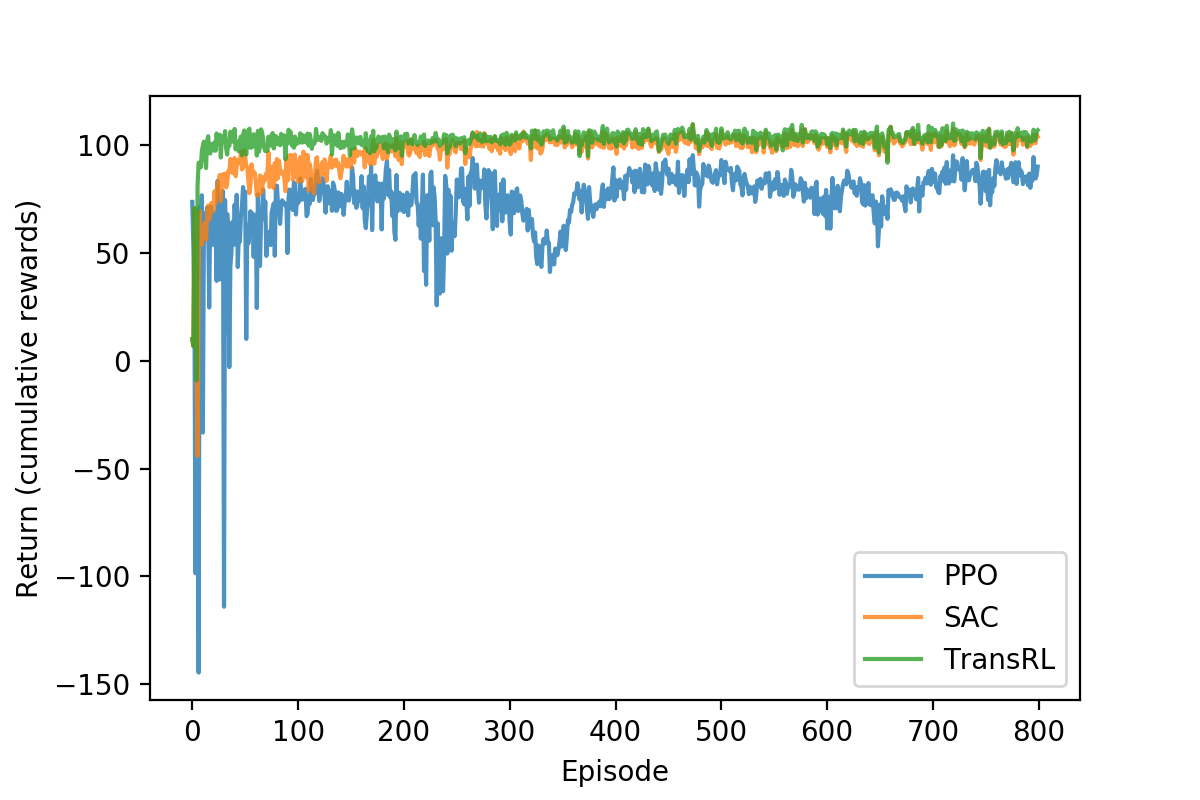}
         \caption{Low demand uncertainty}
     \end{subfigure}
     \begin{subfigure}[b]{0.49\textwidth}
         \centering
         \includegraphics[width=\textwidth]{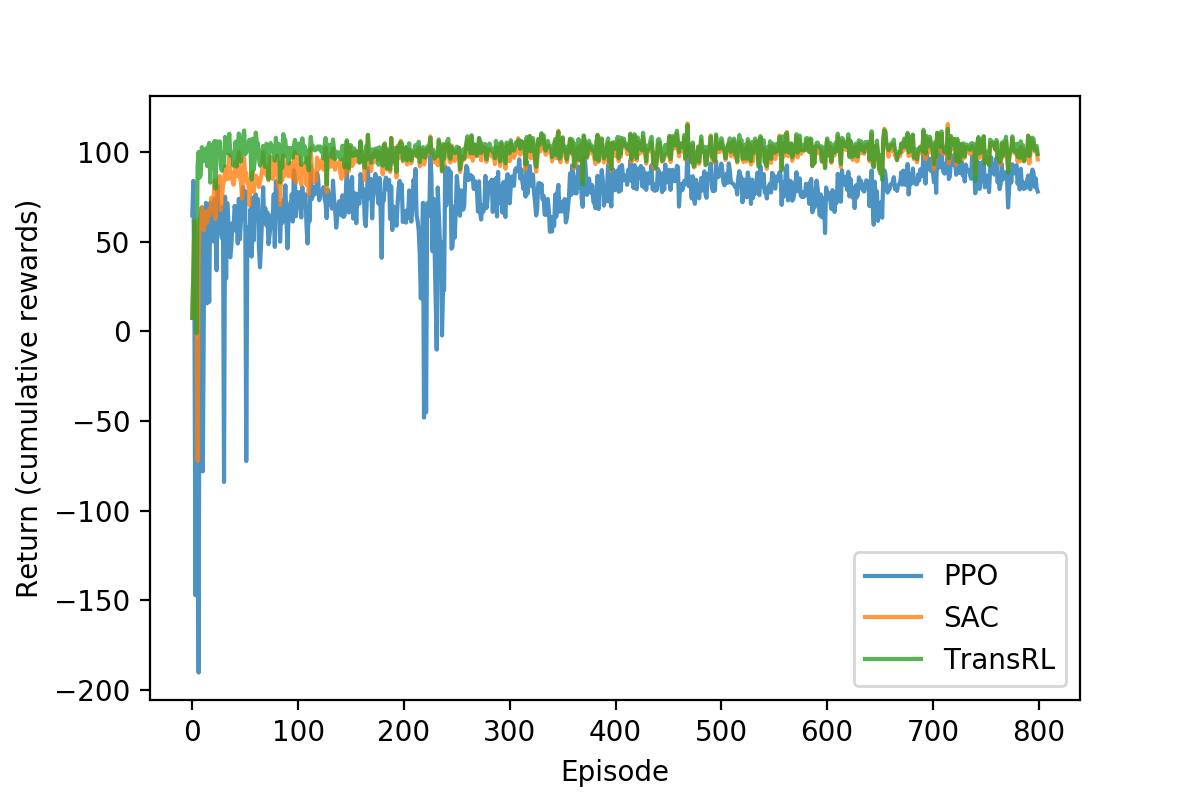}
         \caption{Medium demand uncertainty}
     \end{subfigure}
    \begin{subfigure}[b]{0.49\textwidth}
         \centering
         \includegraphics[width=\textwidth]{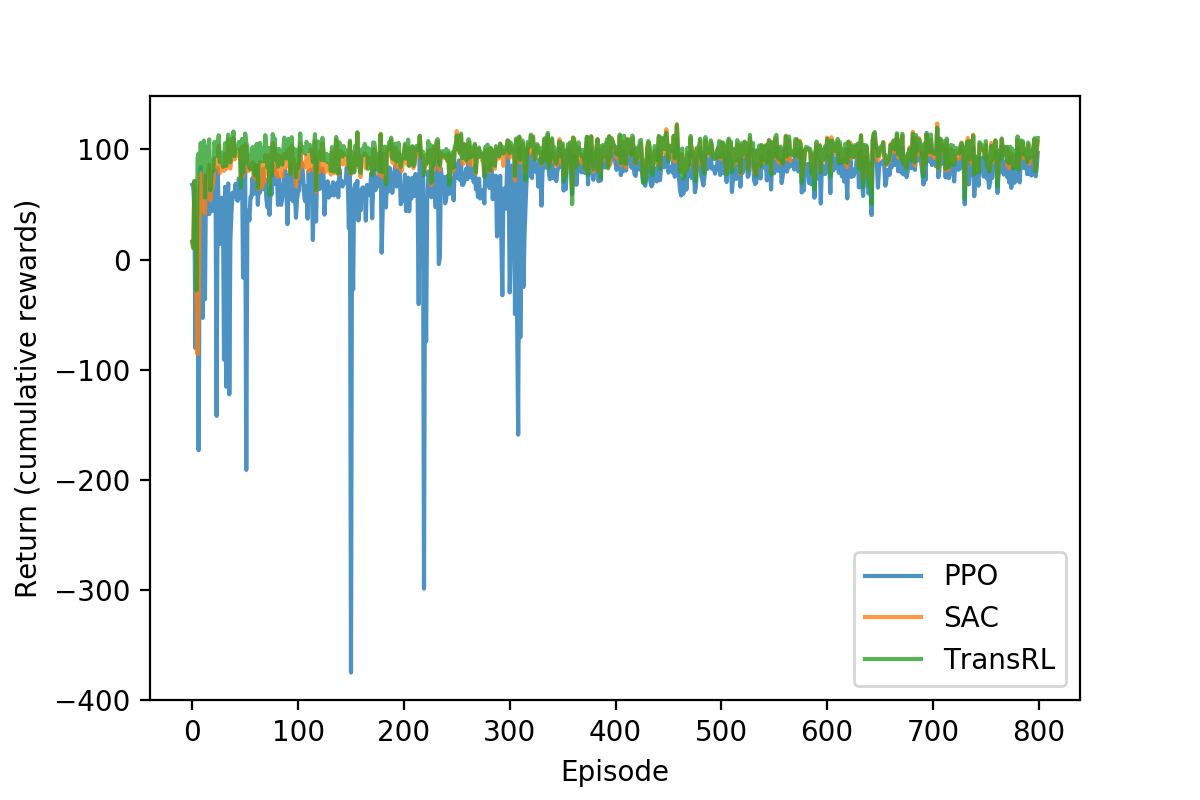}
         \caption{High demand uncertainty}
     \end{subfigure}
        \caption{Training on the 2-route network}
        \label{fig: training on the 2-route network}
\end{figure}

To shed insights on how TransRL learns from the teacher policy, the actions from TransRL and SAC are plotted in Figure \ref{fig: policy evolution on the 2-route network}. For comparison, the actions from the pre-DSO and DSO are also included. Given the single-peak demands, the optimal path ratio for route 1 should decrease during the first half to deviate vehicles to route 2, and then increase to assign more vehicles back to route 1 in general \cite{pi2017stochastic}. The optimal path ratios depend on the specific demands. While DSO uses the actual demands to calculate the optimal path ratios, pre-DSO calculates the "optimal" path ratios using historical mean demands which work as an approximation of the actual demands. As pre-DSO depends on historical mean demands, the actions of pre-DSO are pre-calculated and deterministic along different episodes and different demand scenarios. DSO uses actual demands, so the actions are calculated in real time. As Figure \ref{fig: policy evolution on the 2-route network} shows, the difference between pre-DSO and DSO increases as demand uncertainty becomes high because the actual demands deviate more from the historical mean demands in the high demand uncertainty scenario. As a consequence, pre-DSO performs worse as demand uncertainty increases.

Compared with SAC, TransRL is much more efficient in retrieving a policy close to the optimal policy. In all three demand scenarios, it took 100 episodes that SAC learned to decrease the action during the first 10 time steps, and it took another 300 episodes for SAC to learn to increase the actions during the last 10 time steps. In contrast, TransRL learned the general pattern of decreasing and then increasing actions after just 5 episodes, which indicates the teacher policy boosted the learning process of TransRL. Furthermore, the actions from TransRL are more stable than SAC, which indicates the policy from TransRL is more interpretable. Notably, different from pre-DSO whose actions are fixed, TransRL is adaptive to real-time states and can continuously learn from interactions.

\begin{figure}[H]
     \centering
     \begin{subfigure}[b]{\textwidth}
         \centering
         \includegraphics[width=\textwidth]{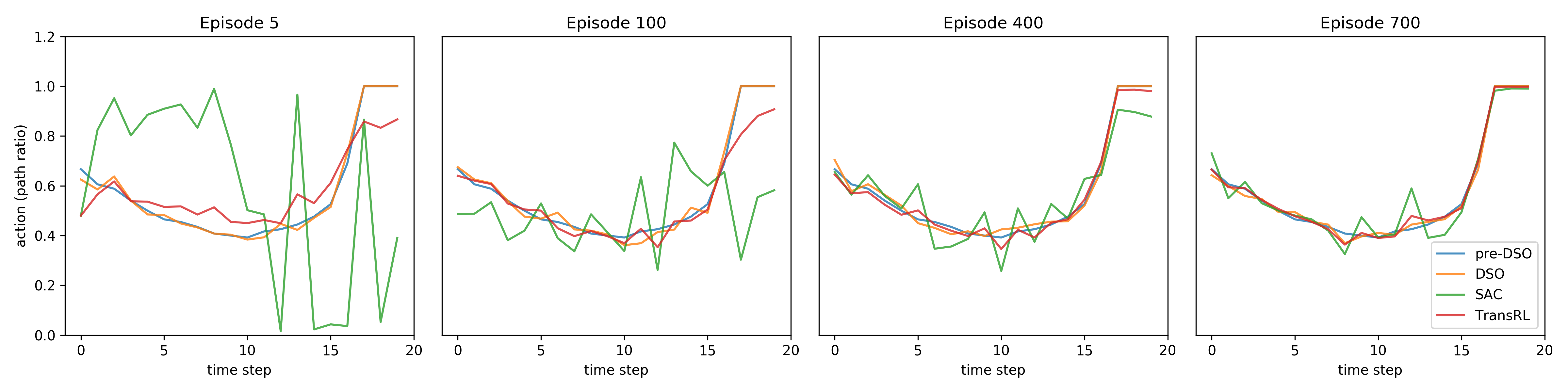}
         \caption{Low demand uncertainty}
     \end{subfigure}
     \begin{subfigure}[b]{\textwidth}
         \centering
         \includegraphics[width=\textwidth]{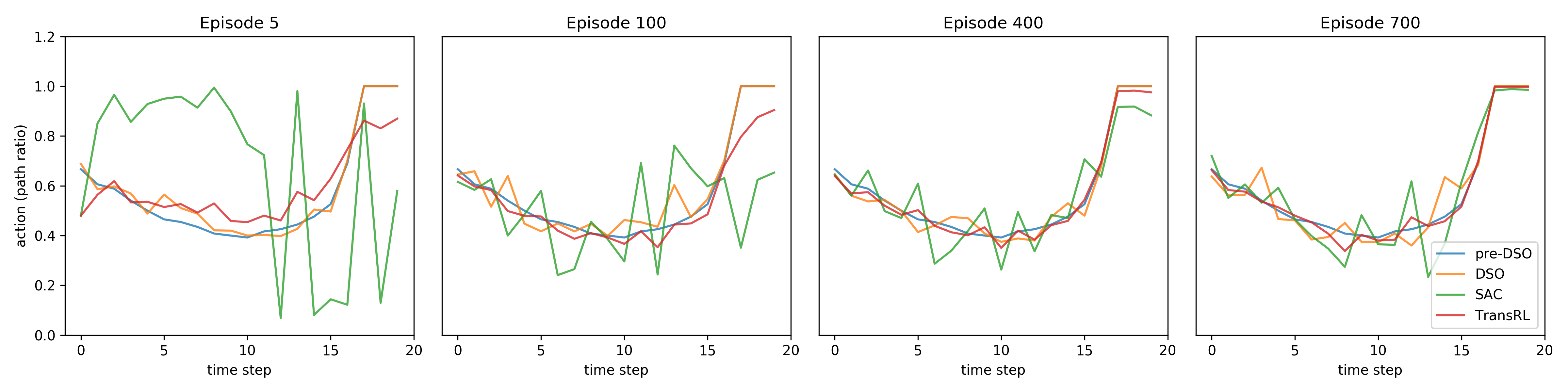}
         \caption{Medium demand uncertainty}
     \end{subfigure}
    \begin{subfigure}[b]{\textwidth}
         \centering
         \includegraphics[width=\textwidth]{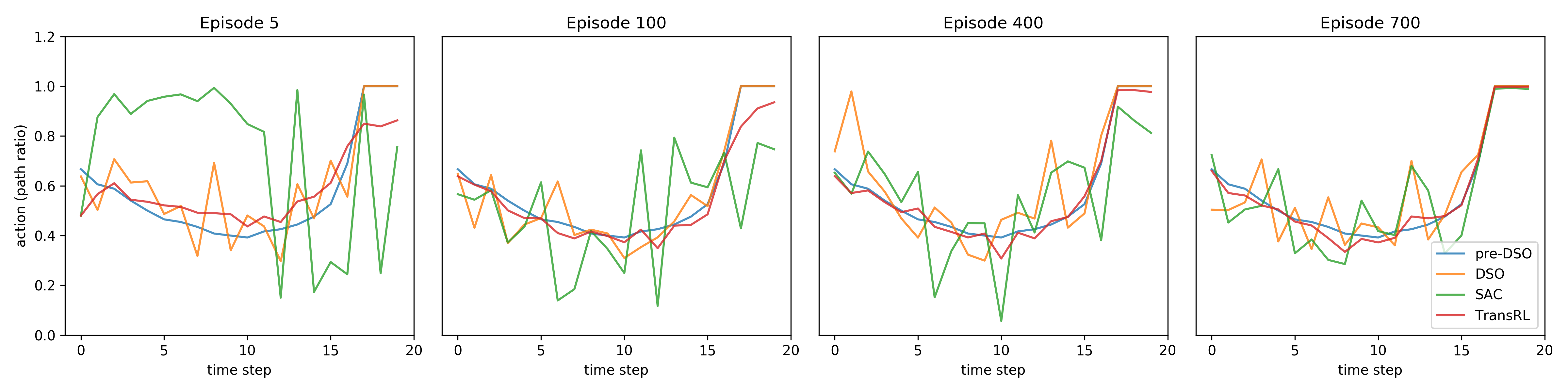}
         \caption{High demand uncertainty}
     \end{subfigure}
        \caption{Policy evolution on the 2-route network}
        \label{fig: policy evolution on the 2-route network}
\end{figure}

\subsubsection{Control performances of online tests}
After the training, we tested all control methods for 100 episodes under uncertain demands with the same random seed. In the no-control case (i.e., UE), the travel time on both routes is identical so that no traveler can benefit from changing routes. For comparison, we also tested the stochastic optimal method in \cite{pi2017stochastic}, which can be regarded as the optimum without knowing the actual demands in advance. \cite{pi2017stochastic} analytically formulates the optimal routing problem of the 2-route network and solves the stochastic optimal policy using dynamic programming, denoted as DP. 

The total travel time under different control methods is summarized in Table \ref{tab: TTT of 2-route net} and plotted in Figure \ref{fig: TTT on 2-route}. Compared with the no-control case, all control methods could significantly reduce the total travel time. The comparison between the model-based method (i.e., pre-DSO) and the model-free RL (i.e., PPO and SAC) is interesting. With low or medium demand uncertainty, pre-DSO led to lower total travel time than PPO and SAC. However, when the demand uncertainty is high, SAC outperformed pre-DSO. This indicates the model-based method performs well when the actual system is close to the approximation, but it is brittle to information inaccuracy. While model-free RL is adaptive, it does not fully utilize prior information.

Notably, in all three demand scenarios, TransRL outperformed both the model-based method and the model-free RL in reducing total travel time and was pretty close to the stochastic optimum (i.e., DP). This can be attributed to TransRL combining the advantages of RL and the model-based method. As an RL, TransRL is able to adapt to real-time states and learn from the training experiences. On the other hand, it utilizes the prior information from the model and historical mean demands.

The reliability of control methods is measured using the proposed metric $\text{CTTRaR}_x$, which is listed in Table \ref{tab: CTTRaR of 2-route net}. Higher $\text{CTTRaR}_x$ indicates more reliability in worst cases. With low demand uncertainty, pre-DSO is more reliable than SAC, but SAC is more reliable than pre-DSO when demand uncertainty is medium or high. Except for the stochastic optimum (DP), TransRL is the most reliable with low or medium demand uncertainty, and SAC is the most reliable with high demand uncertainty. This indicates that when demand uncertainty is high, a more stochastic policy is more reliable. Overall, TransRL is more reliable than pre-DSO, PPO, and SAC.

\begin{table}[H]
\centering
\caption{Total travel time on the 2-route network during 100 test episodes}
\label{tab: TTT of 2-route net}
\begin{tabular}{@{}lp{1.5cm}p{1cm}p{1.5cm}p{1cm}p{1.5cm}p{1cm}@{}}
\toprule
   & \multicolumn{2}{c}{\makecell{Low demand \\ uncertainty}} & \multicolumn{2}{c}{\makecell{Medium demand \\ uncertainty}} & \multicolumn{2}{c}{\makecell{High demand \\ uncertainty}} \\ \midrule
Total travel time & Average        & SD       & Average          & SD         & Average         & SD        \\ \midrule
UE (no control)      & 17334.72         & 266.92 & 17346.28         & 522.11 & 17341.98          & 997.96 \\
pre-DSO (model-based) & 9592.76          & 350.95 & 9959.54          & 706.23 & 10683.08          & 1415.86 \\
PPO (model-free RL)     & 11410.16         & 450.78 & 11193.19         & 633.29 & 11597.74          & 1091.86 \\
SAC (model-free RL)    & 9782.19          & 239.96 & 9971.37          & 488.80 & 10463.12          & 930.34  \\
TransRL (ours) & \underline{9592.45} & 255.91 & \underline{9817.92} & 496.42 & \underline{10357.36}    & 1031.81  \\ 
DP (stochastic optimal)     & \textbf{9450.98}    & 239.24 & \textbf{9730.62}    & 497.39 & \textbf{10267.50} & 918.97  \\ \bottomrule
\end{tabular}
\end{table}

\begin{figure}[H]
    \centering
    \includegraphics[width=0.6\textwidth]{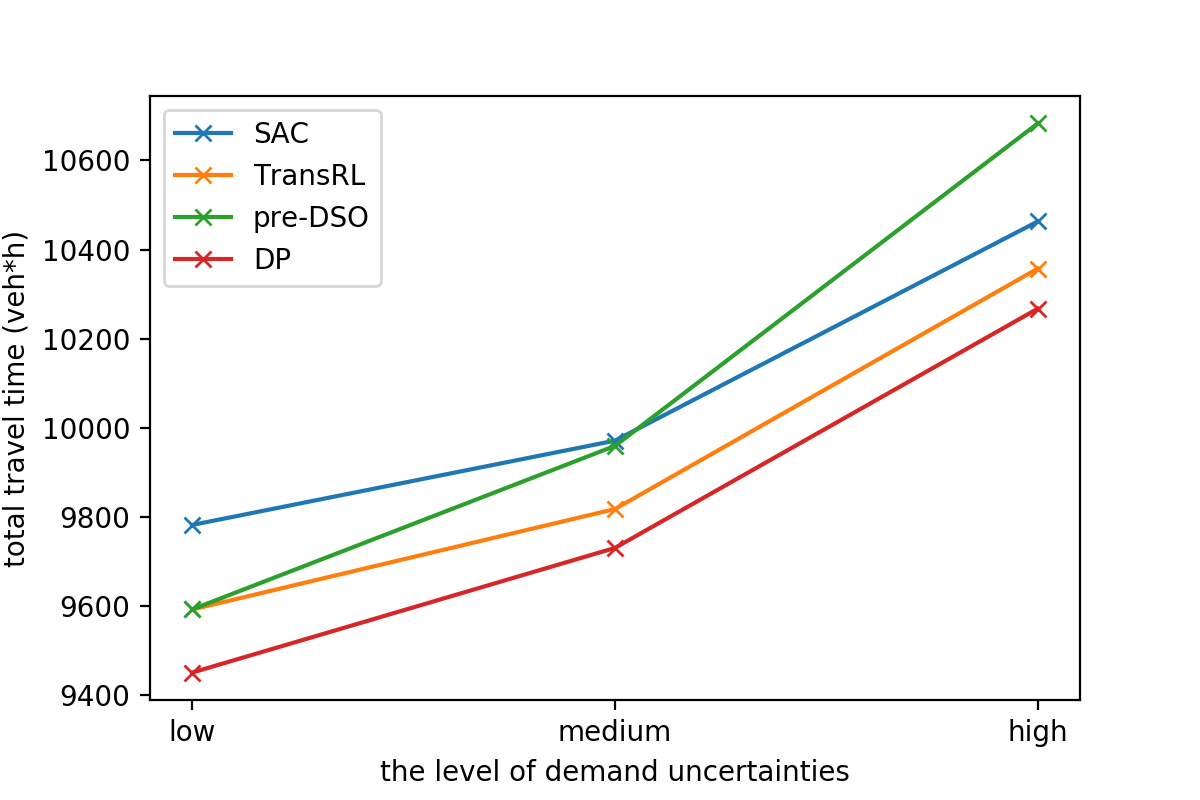}
    \caption{Average total travel time on the 2-route network during 100 test episodes}
    \label{fig: TTT on 2-route}
\end{figure}

\begin{table}[H]
\centering
\caption{$\text{CTTRaR}_{x}$ on the 2-route network during 100 test episodes. The quantile $x$ is set to be 10\%, which means the top 10\% worst episodes.}
\label{tab: CTTRaR of 2-route net}
\begin{tabular}{@{}llll@{}}
\toprule
   &\makecell{Low demand \\ uncertainty} &\makecell{Medium demand \\ uncertainty} & \makecell{High demand \\ uncertainty} \\ \midrule
   & $\text{CTTRaR}_{x}$ &$\text{CTTRaR}_{x}$ & $\text{CTTRaR}_{x}$ \\ \midrule
pre-DSO (model-based) & 7510.85 &	6881.29	& 5535.49
     \\
PPO (model-free RL)     & 5318.24 &	5531.50 & 6068.37
   \\
SAC (model-free RL)    & 7507.93 &	7385.84 &	\underline{7054.28}
        \\
TransRL (ours) & \underline{7700.58}	& \underline{7422.13} &	6764.85
   \\ 
DP (stochastic optimal)     &  \textbf{7917.94} & \textbf{7580.57} &	\textbf{7180.80}
  \\ \bottomrule
\end{tabular}
\end{table}

\subsubsection{Sensitivity analysis of the unreliability parameter}
In TransRL, the unreliability parameter $\sigma$ determines how much the TransRL should depend on the transportation method and how extensive TransRL should explore the action space. To look into the impacts of the unreliability parameter, we compared the training processes and the control performances of TransRL with different unreliability parameters.

The training processes with different unreliability parameters under three demand scenarios are plotted in Figure \ref{fig: SA on the 2-route network}. In general, the training process was insensitive to the unreliability parameter. After convergence, different parameters led to similar performances. In low or medium demand uncertainty scenarios, the value of the unreliability parameter did affect the convergence speed, and smaller unreliability parameters led to faster convergence. While in the high demand uncertainty, the convergence speed was almost identical along different parameters. The control performances with different parameters align with this phenomenon. As Figure \ref{fig: SA TTT on the 2-route network} shows, the control performances were generally incentive to the parameters, and small unreliability parameters performed better than large unreliability parameters in low or medium demand uncertainty scenarios.

\begin{figure}[H]
     \centering
     \begin{subfigure}[b]{0.49\textwidth}
         \centering
         \includegraphics[width=\textwidth]{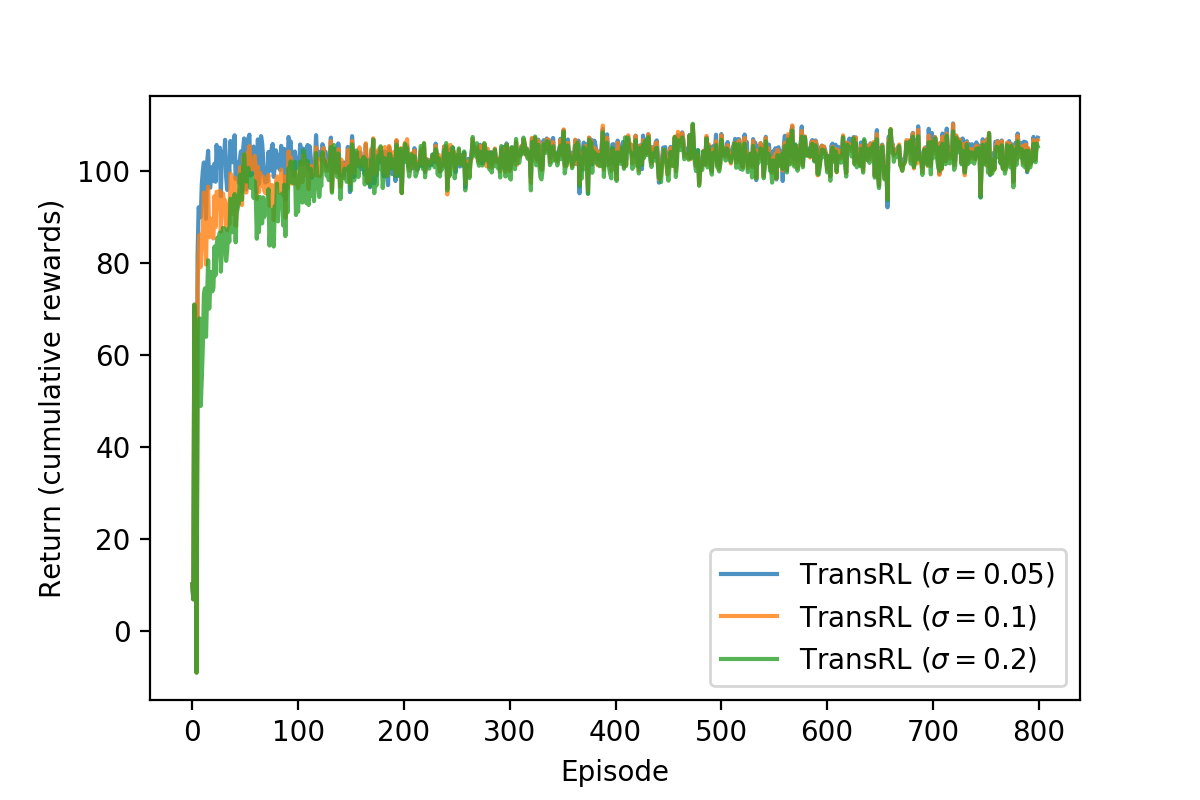}
         \caption{Low demand uncertainty}
     \end{subfigure}
     \begin{subfigure}[b]{0.49\textwidth}
         \centering
         \includegraphics[width=\textwidth]{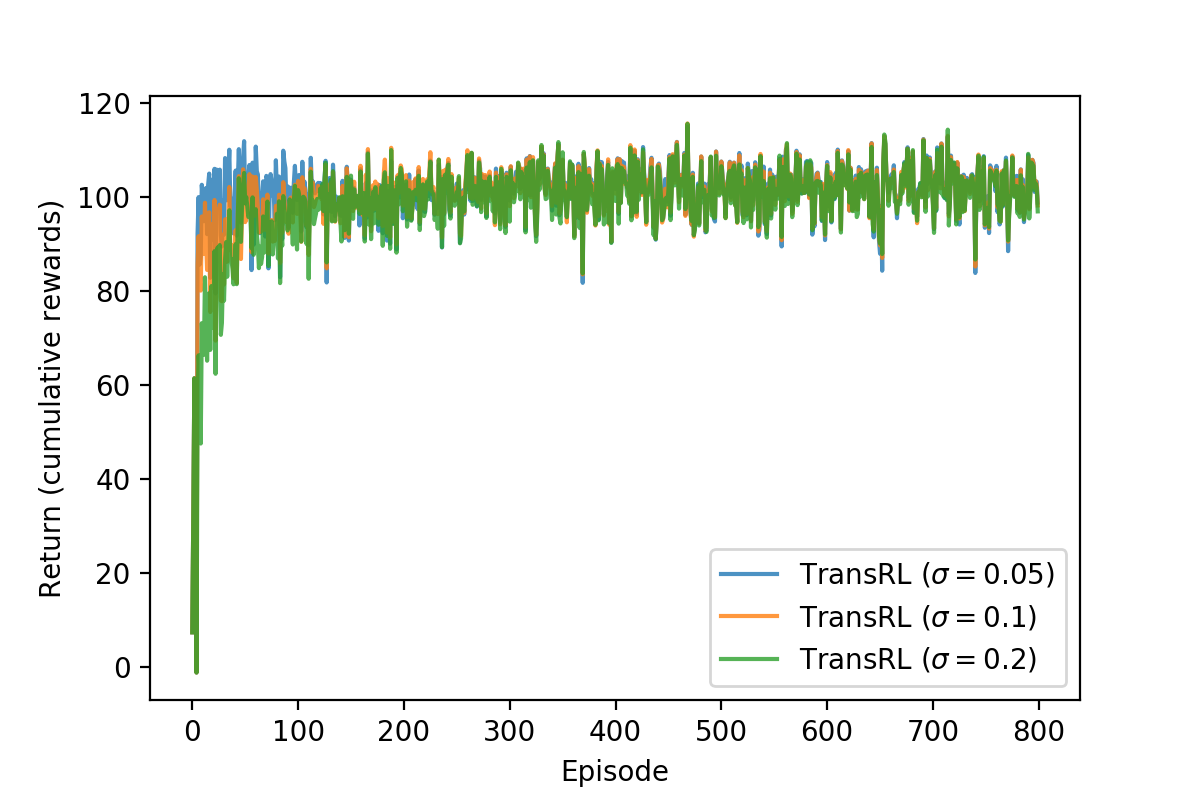}
         \caption{Medium demand uncertainty}
     \end{subfigure}
    \begin{subfigure}[b]{0.49\textwidth}
         \centering
         \includegraphics[width=\textwidth]{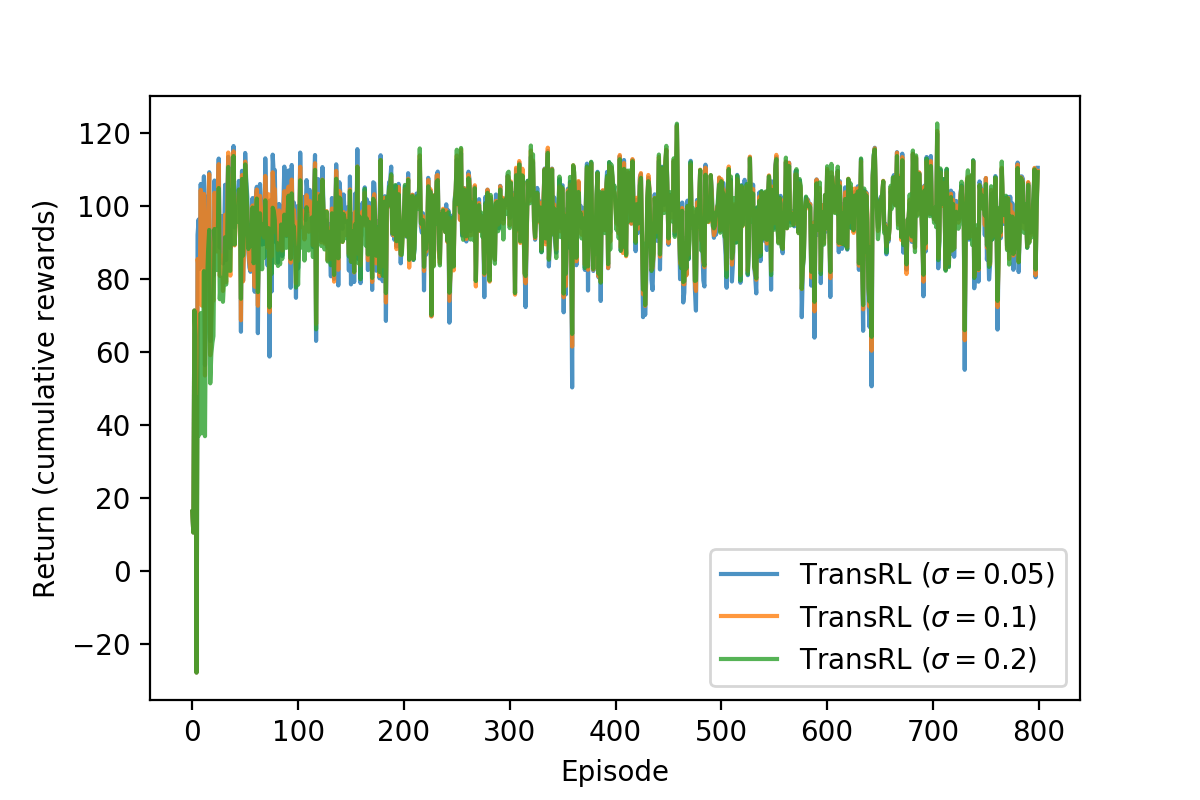}
         \caption{High demand uncertainty}
     \end{subfigure}
        \caption{Sensitivity analysis on the 2-route network: training process}
        \label{fig: SA on the 2-route network}
\end{figure}

\begin{figure}[H]
     \centering
     \begin{subfigure}[b]{0.49\textwidth}
         \centering
         \includegraphics[width=\textwidth]{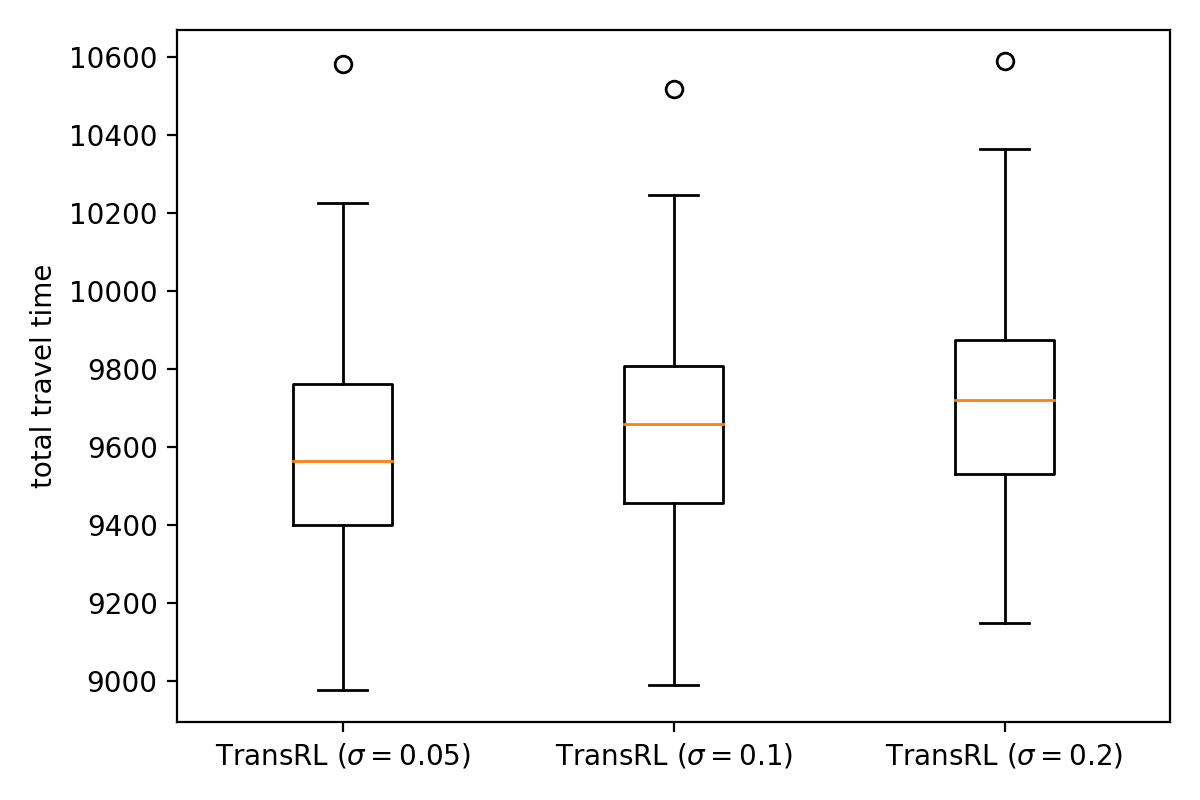}
         \caption{Low demand uncertainty}
     \end{subfigure}
     \begin{subfigure}[b]{0.49\textwidth}
         \centering
         \includegraphics[width=\textwidth]{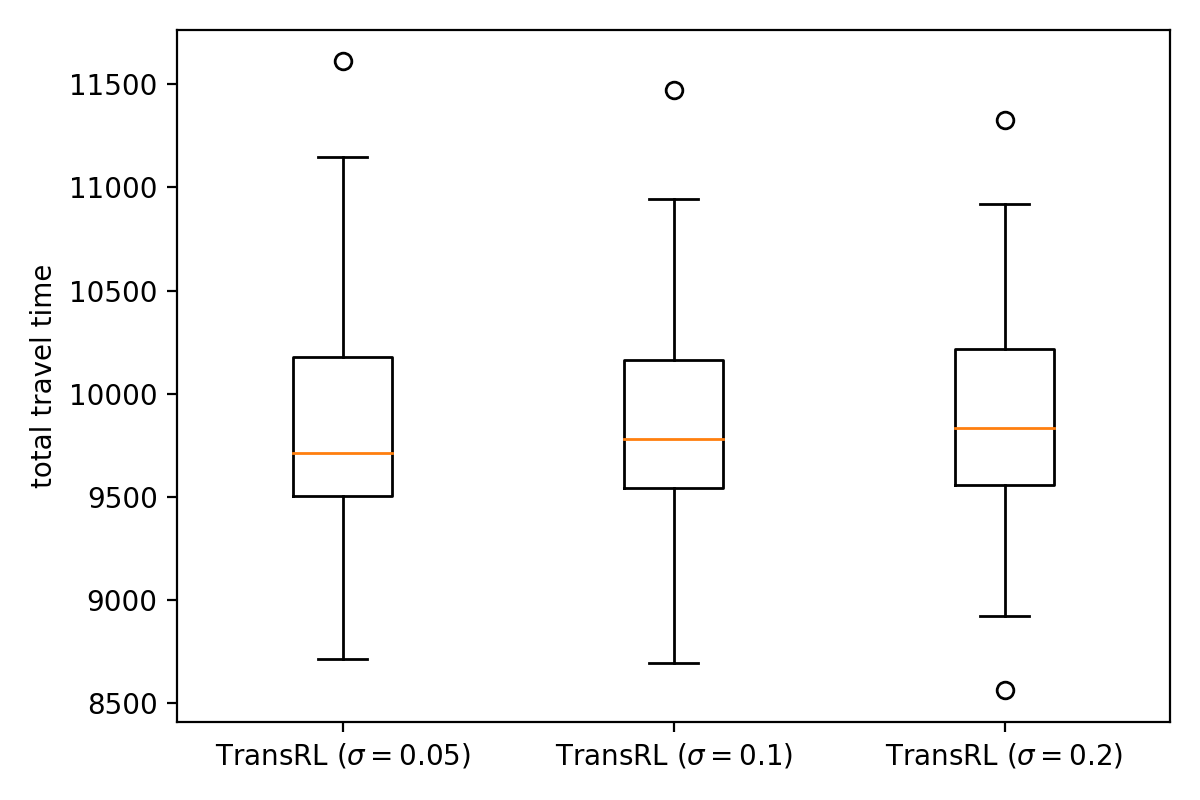}
         \caption{Medium demand uncertainty}
     \end{subfigure}
    \begin{subfigure}[b]{0.49\textwidth}
         \centering
         \includegraphics[width=\textwidth]{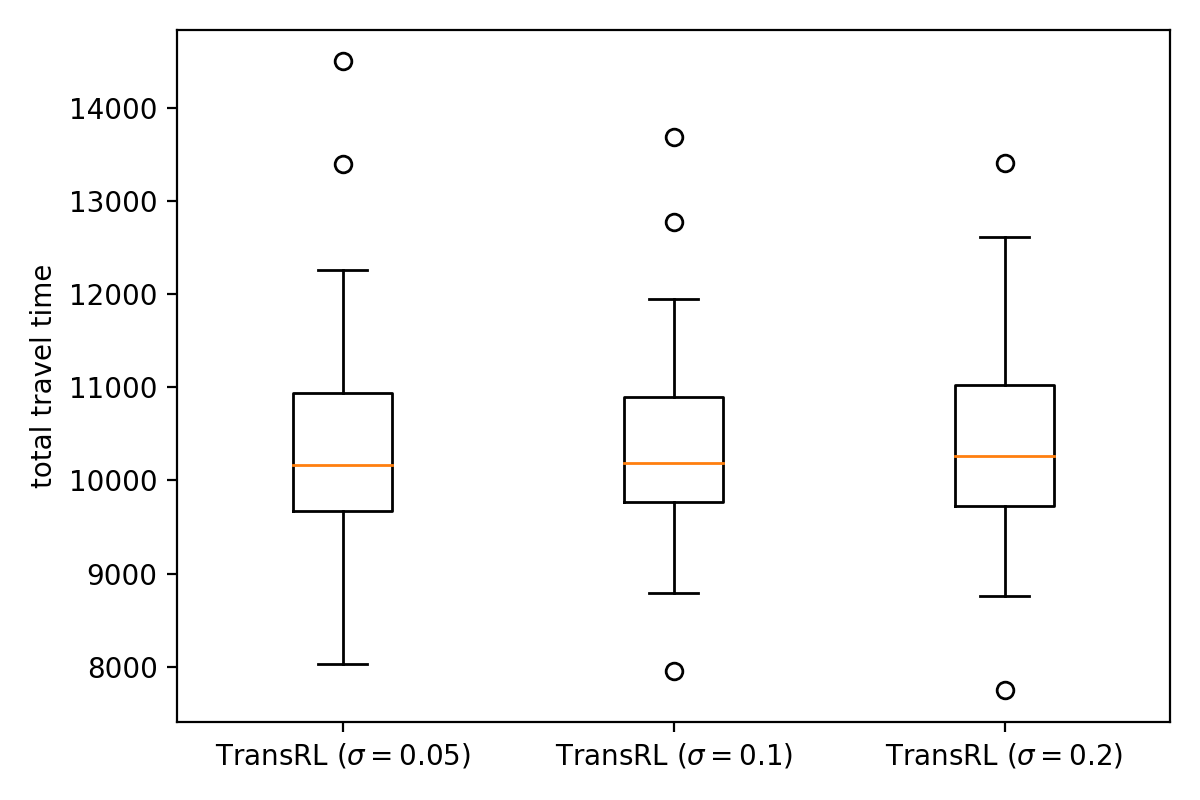}
         \caption{High demand uncertainty}
     \end{subfigure}
        \caption{Sensitivity analysis on the 2-route network: total travel time}
        \label{fig: SA TTT on the 2-route network}
\end{figure}

\subsection{A synthetic network with uncertain demands and model mismatch}

The second network for experiments is a synthetic corridor network first used in \cite{nie2006variational}. As Figure \ref{fig: corridor network} shows, the corridor network, which aims to abstract a commuting network, is composed of 18 links and 16 nodes. Links 1-2, 2-4, 4-6, 6-8, and 8-10 are freeway links, so these links has high free-flow slow and large capacity. In addtion, link 6-8 is a bottleneck with smaller capacity caused by a lane-drop. Links 3-5, 5-7, and 7-9 are arterial links with lower free-flow speed than freeway links. Links 3-2, 4-5, 7-6, and 8-9 are on-ramp or off-ramp connecting freeway links with arterial links. All the other links are local streets connecting nodes with arterial links or freeway links.

We consider many-to-one OD pairs and each OD pair has a typical single-peak demand pattern. Specifically, node 11 and node 12 are the origin nodes, while node 15 is the destination node. At each time step, the demand follows a Gaussian distribution $q_t^{ge} \sim \mathcal{N} (\mu_t^{ge}, \beta \mu_t^{ge})$, where $\mu_t^{ge}$ is the historical mean demand of OD pair $ge$ at time $t$ and $\beta=0.10$ is the demand uncertainty parameter. Between each OD pair, there are multiple routes composed of freeway links and/or arterial links to choose from.

In this network, we also consider a model mismatch between the actual system dynamics ($\mathcal{M}$) that is unknown in practice, and the approximated/estimated traffic model ($\Tilde{\mathcal{M}}$). The network is modeled by CTM, and CTM cells are defined by length, the number of lanes, free-flow speed, capacity, and jam density. In the real world, the estimated free-flow speed and capacity may differ from the actual unknown values, which causes a model mismatch. In our experiments, we randomly set the free-flow speed and capacity in the traffic model $\Tilde{\mathcal{M}}$ to be 10-20\% higher or lower than those in the actual system dynamics $\mathcal{M}$ where the online RL models are not offline trained by.

This scenario is more challenging than the 2-route network. First, there are more links and routes than the 2-route network, so the state space and action space are larger. Besides, some routes share the same links, which makes the system dynamics more complicated. More importantly, a model mismatch exists, so it's more challenging for the model-based method (pre-DSO) and TransRL.

\begin{figure}[H]
    \centering
    \includegraphics[width=0.6\textwidth]{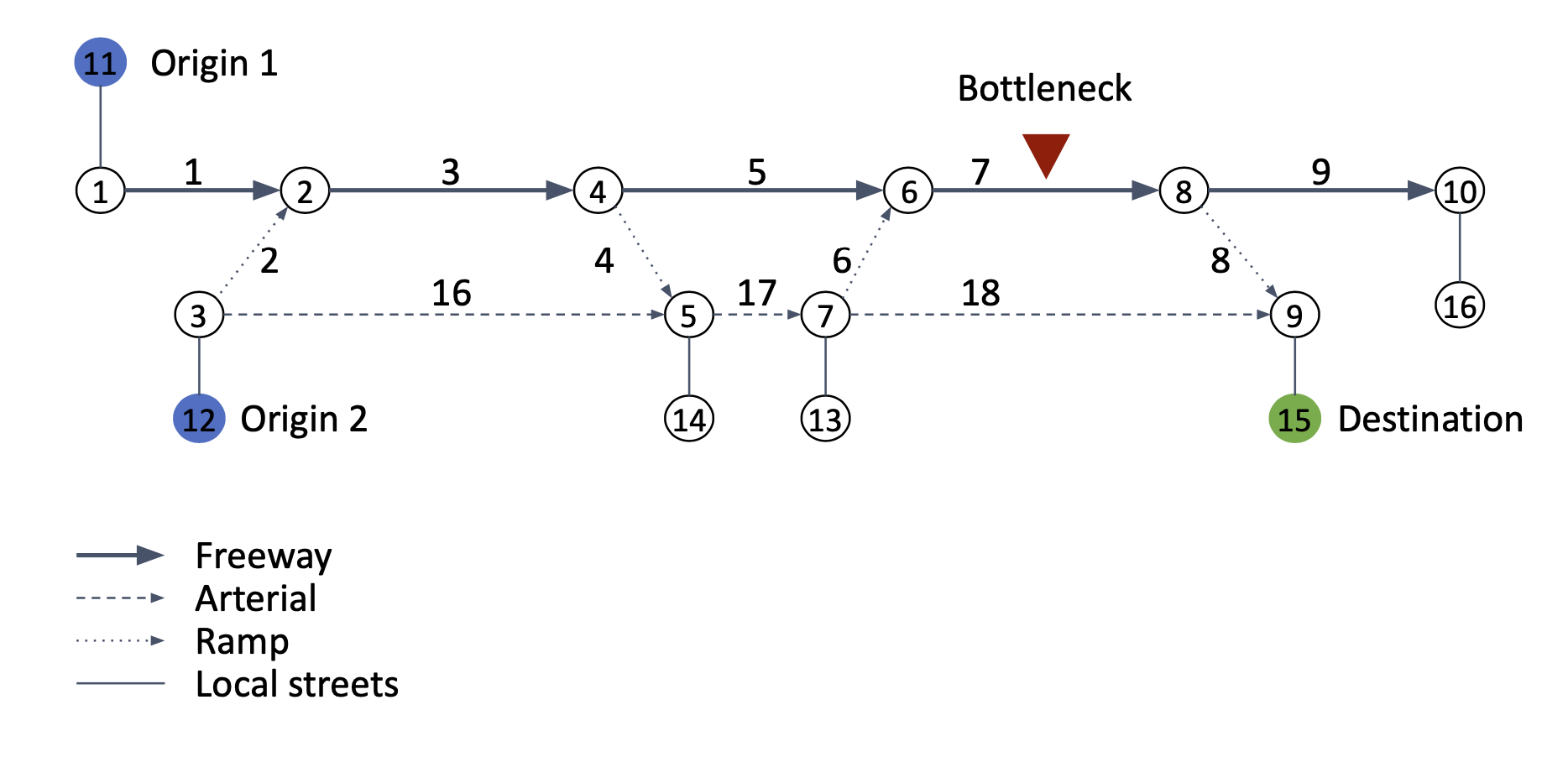}
    \caption{The synthetic corridor network}
    \label{fig: corridor network}
\end{figure}

\subsubsection{Training of reinforcement learning}
The training processes of PPO and SAC are plotted in Figure \ref{fig: training on corridor network}. PPO was struggling to find a reasonable policy and did not converge after 2000 episodes. While SAC converged after approximately 500 episodes, the performance was pretty fluctuating even after the convergence. The performance of PPO and SAC indicates the increase in state and action space makes model-free reinforcement learning less efficient than learning from random exploration.

TransRL with different unreliability parameters is also compared in Figure \ref{fig: training on corridor network}. With various unreliability parameters, TransRL learned faster and reached a more stable performance than PPO and SAC. Besides, comparing TransRL with $\sigma=0.05$ and TransRL with $\sigma=0.10$, we can find TransRL with $\sigma=0.05$ converged faster than TransRL with $\sigma=0.10$, but TransRL with $\sigma=0.10$ ended with a slightly higher return than TransRL with $\sigma=0.10$. This indicates a trade-off between the convergence speed and the final performance. When $\sigma=0.20$, the TransRL was similar to SAC, which supported Theorem \ref{theorem: SAC and TransRL} claiming SAC is equivalent to TransRL with a teacher policy of uniform distribution (i.e., $\sigma = \infty$ in this case).

\begin{figure}[H]
    \centering
     \begin{subfigure}[b]{0.49\textwidth}
         \centering
         \includegraphics[width=\textwidth]{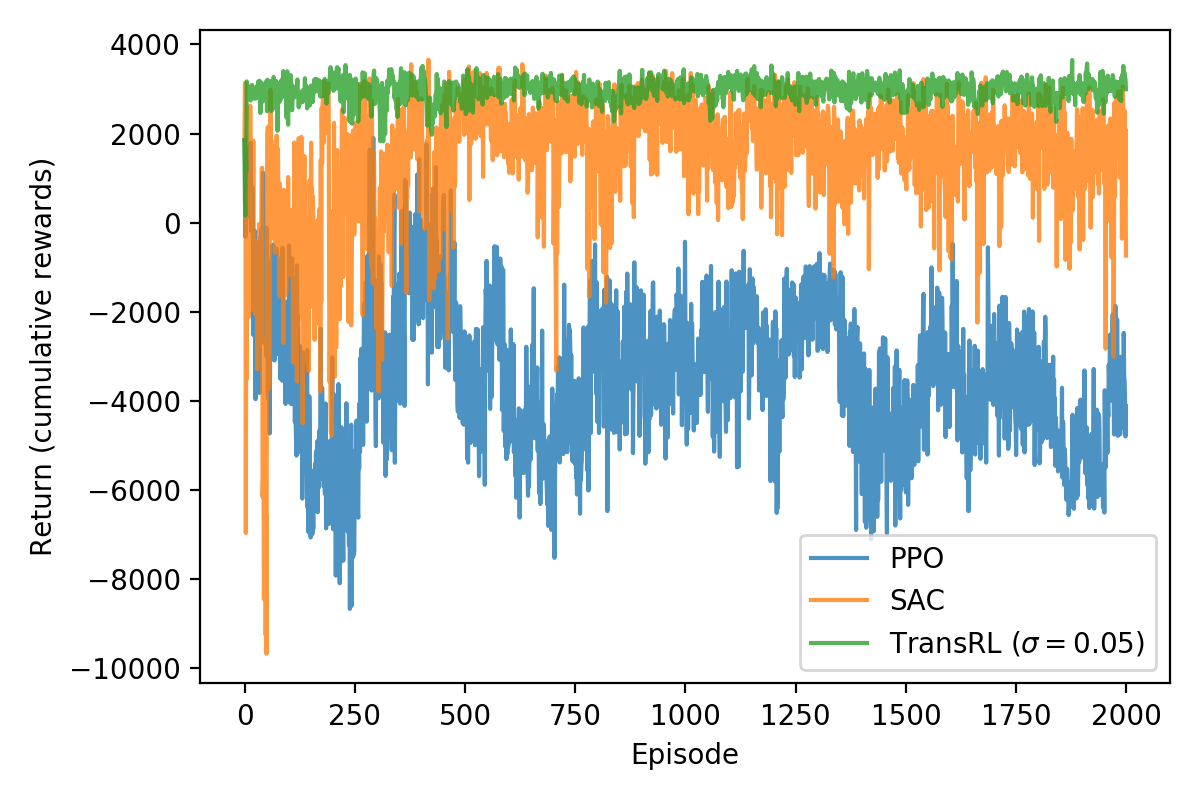}
         \caption{TransRL with an unreliability parameter of 0.05}
     \end{subfigure}
     \begin{subfigure}[b]{0.49\textwidth}
         \centering
         \includegraphics[width=\textwidth]{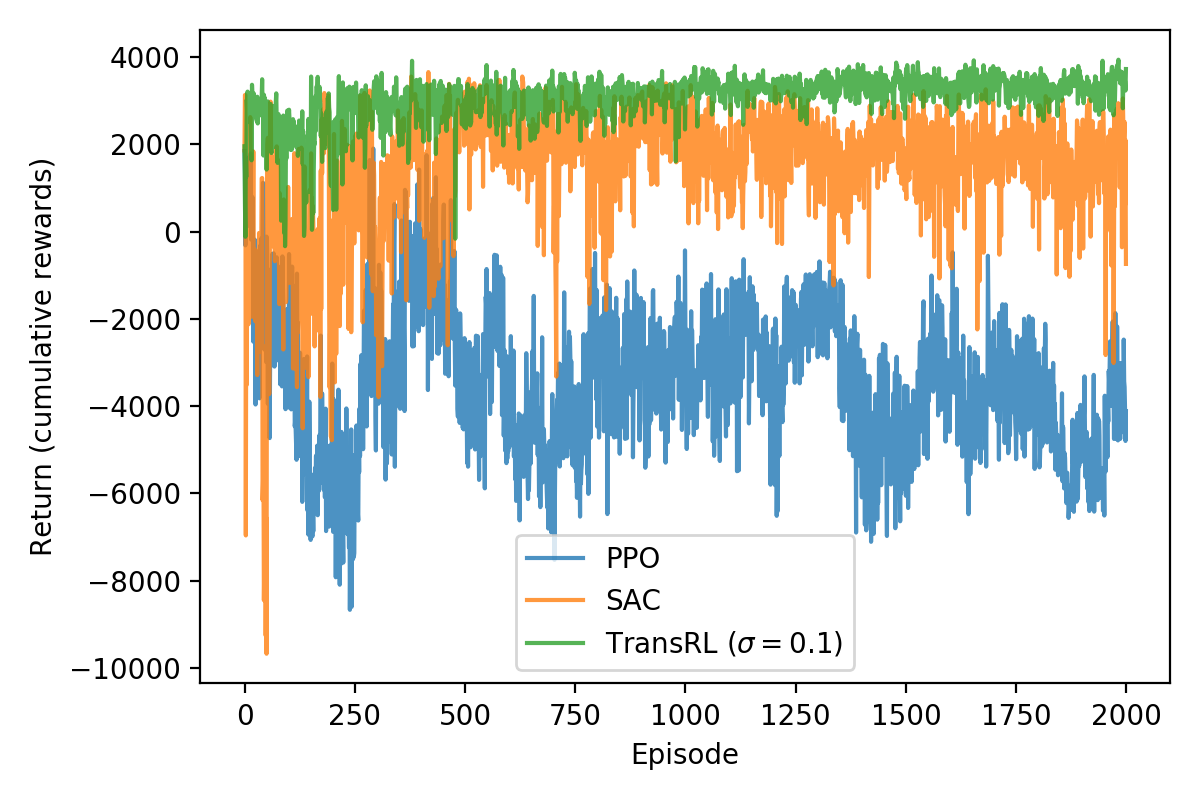}
         \caption{TransRL with an unreliability parameter of 0.10}
     \end{subfigure}
     \begin{subfigure}[b]{0.49\textwidth}
         \centering
         \includegraphics[width=\textwidth]{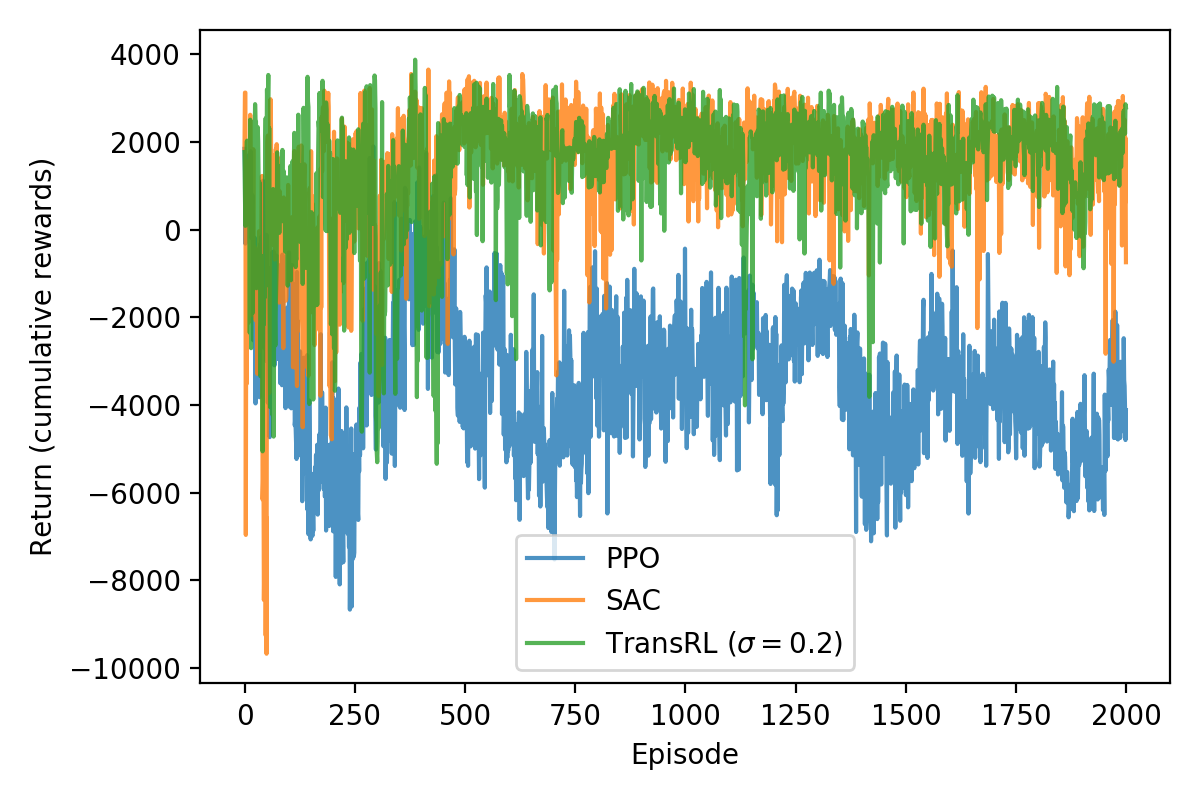}
         \caption{TransRL with an unreliability parameter of 0.20}
     \end{subfigure}
    \caption{Training on the corridor network}
    \label{fig: training on corridor network}
\end{figure}

\subsubsection{Control performances of online tests}
We tested trained RL and baselines for 100 test episodes with the same uncertain demands and model mismatch. The resultant total travel time is summarized in Table \ref{tab: TTT of corridor net} and also plotted in Figure \ref{fig: corridor network}. Though demands and the traffic model used in the model-based method (pre-DSO) are different from the actual demands and actual system dynamics, pre-DSO outperformed both model-free RL (SAC and PPO) in reducing total travel time. The average total travel time of PPO was even higher than the no-control baseline. 

With all three unreliability parameters, TransRL led to a lower average total travel time than PPO and SAC. When the unreliability parameter $\sigma$ of TransRL is 0.05 or 0.10, the average total travel time of TransRL was lower than pre-DSO. When $\sigma=0.20$, the average total travel time of pre-DSO was lower than TransRL. In summary, TransRL with $\sigma=0.10$ got the best control performance. The results also indicate careful tuning of the unreliability parameter can further enhance the performance of TransRL.

The reliability results of control methods on the corridor network are included in Table \ref{tab: CTTRaR of corridor net}. $\text{CTTRaR}_x$ of PPO and SAC are negative, which means, in the worst cases, PPO and SAC perform worse than the no-control baseline. Both pre-DSO and TransRL are reliable as $\text{CTTRaR}_x$ is high. Interestingly, while TransRL ($\sigma=0.10$) leads to the lowest average total travel time, TransRL ($\sigma=0.05$) is the most reliable. Results on this network indicate that, even when model-free RL is not very reliable, TransRL is still reliable because of the action constraints from the teacher policy.

\begin{table}[H]
\centering
\caption{Total travel time on the corridor network during 100 test episodes}
\label{tab: TTT of corridor net}
\begin{tabular}{@{}lll@{}}
\toprule
Total travel time       & Average         & SD      \\ \midrule
UE (no control)         & 117540.56       & 1256.21 \\
pre-DSO (model-based)                 & 113351.17       & 1347.67 \\
PPO (model-free RL)                  &  128814.06     & 3152.06  \\
SAC (model-free RL)                     & 116072.38       & 2720.69 \\
TransRL ($\sigma=0.05$) & \underline{112915.17}       & 1357.93 \\
TransRL ($\sigma=0.10$) & \textbf{112380.91} & 1550.75 \\
TransRL ($\sigma=0.20$) & 115536.98       & 1769.41 \\ \bottomrule
\end{tabular}
\end{table}

\begin{figure}[H]
    \centering
    \includegraphics[width=0.8\textwidth]{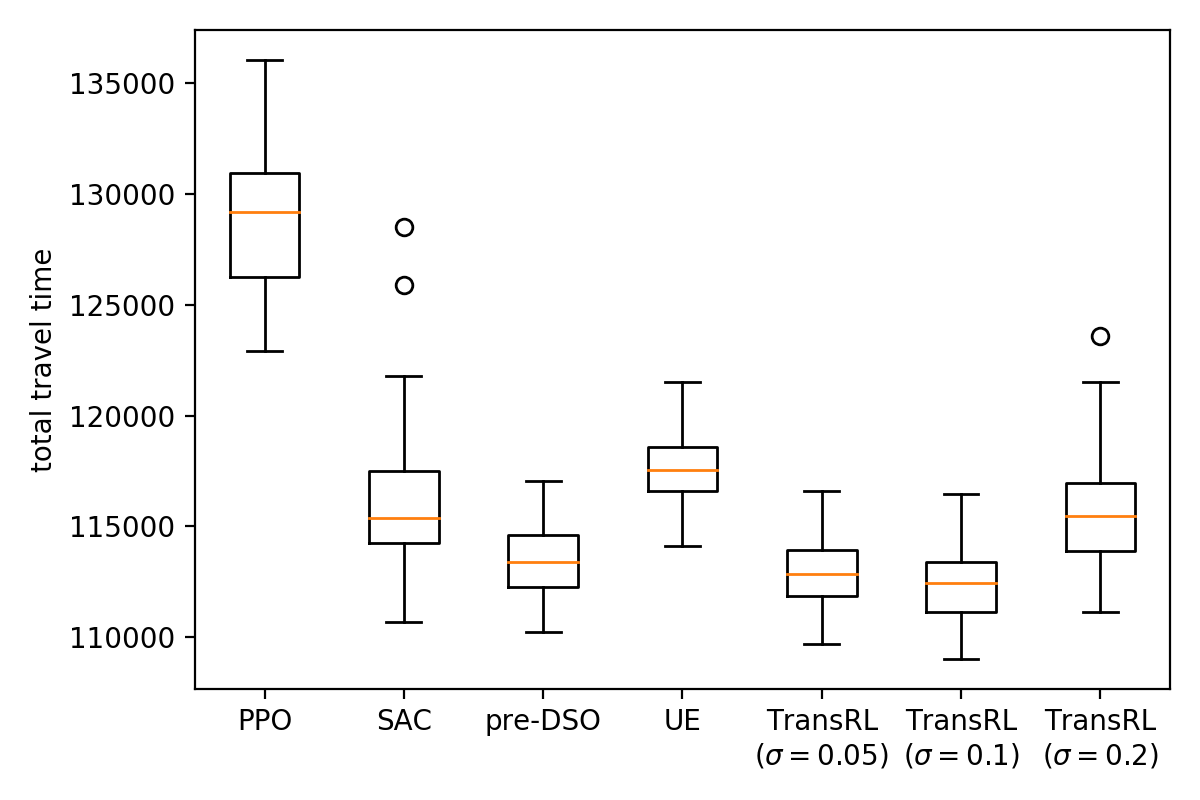}
    \caption{Total travel time on the corridor network during 100 test episodes}
    \label{fig: TTT on corridor network}
\end{figure}

\begin{table}[H]
\centering
\caption{$\text{CTTRaR}_{x}$ on the corridor network during 100 test episodes. The quantile $x$ is set to be 10\%, which means the top 10\% worst episodes.}
\label{tab: CTTRaR of corridor net}
\begin{tabular}{@{}ll@{}}
\toprule
   & $\text{CTTRaR}_{x}$ \\ \midrule 
pre-DSO (model-based) & \underline{4249.26
}   \\
PPO (model-free RL)     & -13089.30
 \\
SAC (model-free RL)    &  -2125.19
      \\
TransRL ($\sigma=0.05$) & \textbf{4418.89
}   \\ 
TransRL ($\sigma=0.10$) & 3882.36
   \\ 
TransRL ($\sigma=0.20$) &  1357.70
  \\ \bottomrule
\end{tabular}
\end{table}

\subsection{A large network with uncertain demands and traffic model mismatch}
The third network we experiment on is the Transportation Systems Management and Operations (TSMO) \# 1 network in Maryland, US. TSMO network contains a freeway I-70 and multiple US routes. As Figure \ref{fig: TSMO network} shows, there are 621 links, 361 nodes, and 182 OD pairs on the TSMO network. During morning peak hours, most travelers travel from west to east or south, and the eastbound of the I-70 is recurrently congested.

To estimate the demands of the TSMO network, we adopted the Dynamic OD demand Estimation (DODE) in \cite{ma2020estimating} using historical count data within the TSMO network. The DODE aims to solve dynamic demands to reproduce the link flows that match the historical count data. After estimating the dynamic demands, we generate demands using a Gaussian distribution $q_t^{ge} \sim \mathcal{N} (\mu_t^{ge}, \beta \mu_t^{ge})$, where $\mu_t^{ge}$ is the estimated demand of OD pair $ge$ at time $t$ and $\beta=0.10$ is the demand uncertainty parameter.

We consider two incident scenarios to evaluate the performances of the studied methods with unexpected incidents. Specifically, the accessible (approximated) traffic model $\Tilde{\mathcal{M}}$ is the network without incidents, while the actual system dynamics $\mathcal{M}$ is the network with unexpected incidents. Note, except for the incident links, the other attributes of $\Tilde{\mathcal{M}}$ and $\mathcal{M}$ are assumed to be the same to independently analyze the impacts of the incidents. The incident locations are marked using stars as Figure \ref{fig: TSMO network} shows. Both incidents happen on the freeway, and travelers cannot deviate from the incident location while observing the incident. Therefore, the incidents on the freeway will cause a nontrivial change in system dynamics. With the occurrence of incidents, we assume one lane is blocked and the free-flow speed decreases on the incident link.

In our optimal traffic routing problem, we consider a practical setting, where only a few links are observed and a few OD pairs are under control (e.g., the flow among those OD pairs may be influenced by dynamic message signs or naturally in a better geogrphic position to make route choices). More specifically, the number of observed links is 14, and the observed links are highlighted in Figure \ref{fig: TSMO network}. Among 182 OD pairs, there are 14 OD pairs of which flows can be influenced to re-route.

\begin{figure}[H]
    \centering
    \includegraphics[width=0.99\textwidth]{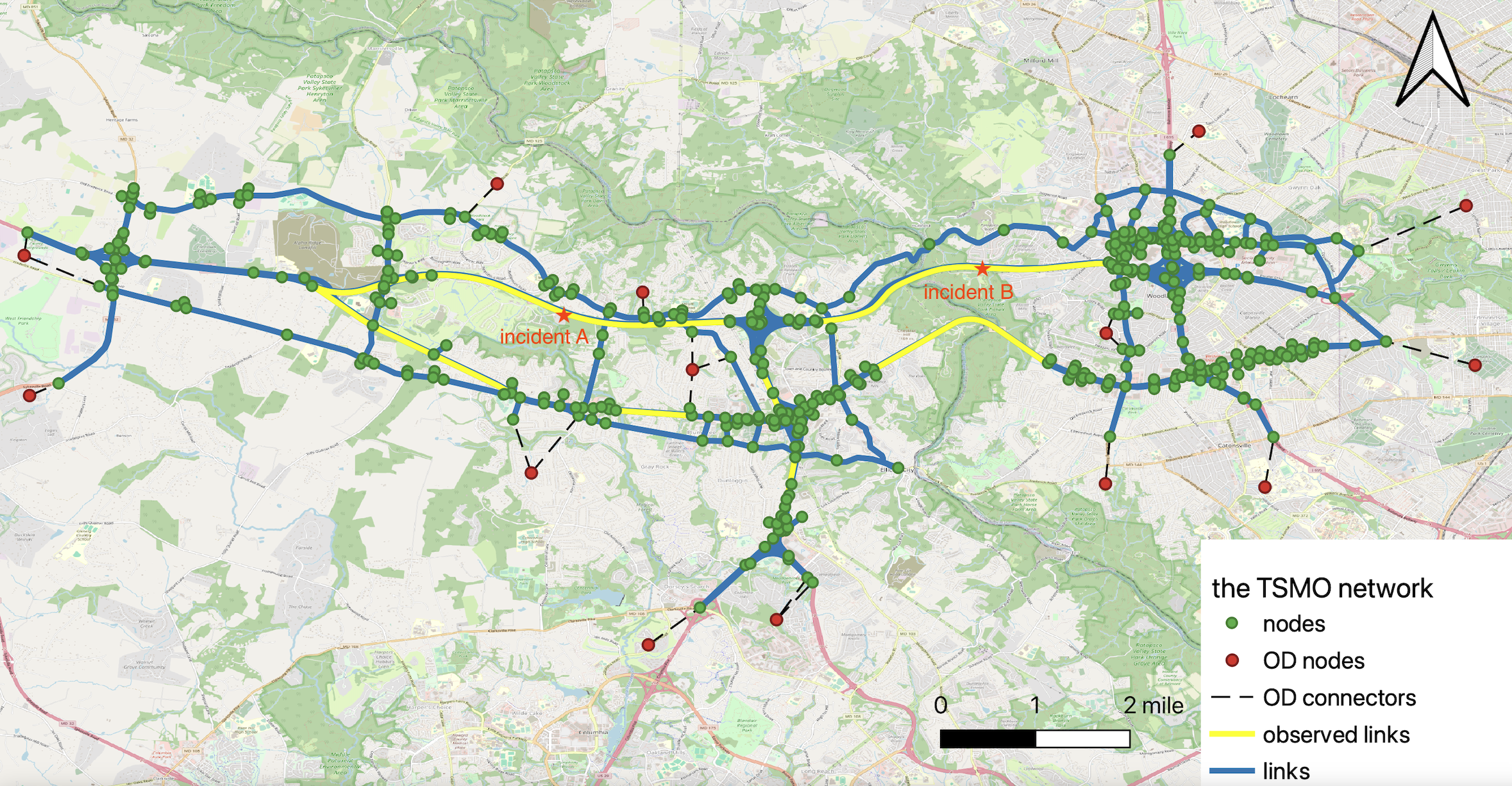}
    \caption{The TSMO network}
    \label{fig: TSMO network}
\end{figure}

\subsubsection{Calibration of dynamic demands}
Within the historical count data in the TSMO network, vehicles are classified into cars and trucks, we adopted the multi-class dynamic OD demand estimation (MCDODE) in \cite{ma2020estimating} to estimate car demands and truck demands. The results of MCDODE for the TSMO network are plotted in Figure \ref{fig: DODE}. The loss shows the MCDODE converged to a stable solution for both car data and truck data. The scatter plots in Figure \ref{fig: DODE} compare the observed count data and the estimated count data generated by the estimated demands. Most estimated data matched the observed data pretty well. The data points where the estimated data is significantly lower than the observed data can be attributed to the links located in the marginal area. After the estimation of the demands, we convert all vehicles into passenger car units (PCU) using passenger car equivalent (PCE) for subsequent experiments.

\begin{figure}[H]
    \centering
    \begin{subfigure}[b]{0.49\textwidth}
         \centering
         \includegraphics[width=\textwidth]{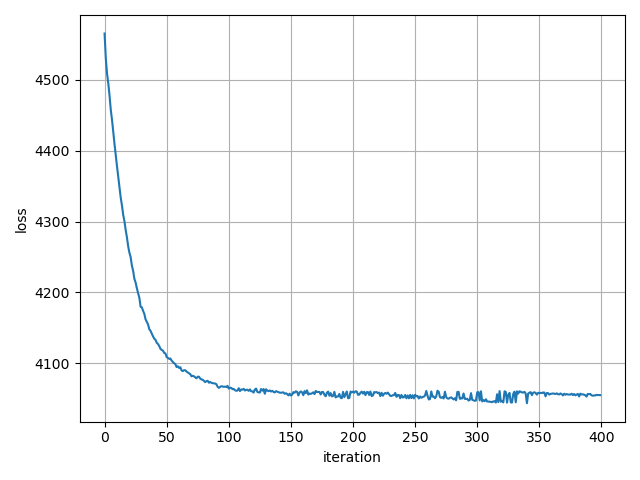}
         \caption{Loss for car count data}
     \end{subfigure}
          \begin{subfigure}[b]{0.49\textwidth}
         \centering
         \includegraphics[width=\textwidth]{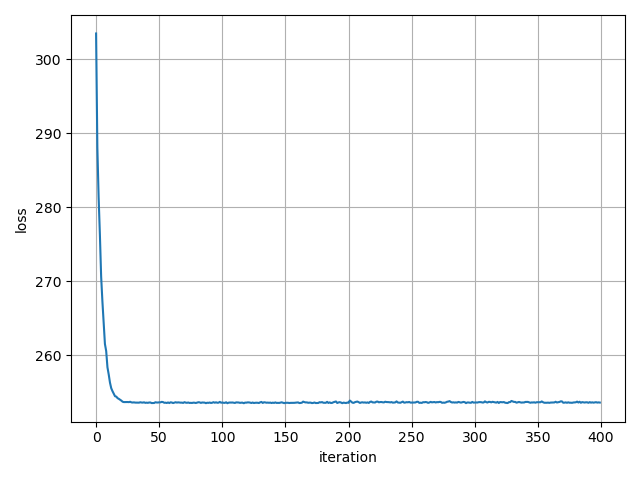}
         \caption{Loss for truck count data}
     \end{subfigure}
     \begin{subfigure}[b]{0.49\textwidth}
         \centering
         \includegraphics[width=\textwidth]{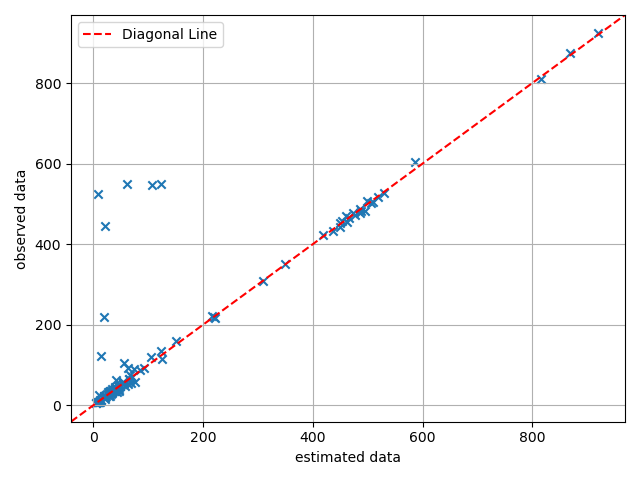}
         \caption{Car count data}
     \end{subfigure}
     \begin{subfigure}[b]{0.49\textwidth}
         \centering
         \includegraphics[width=\textwidth]{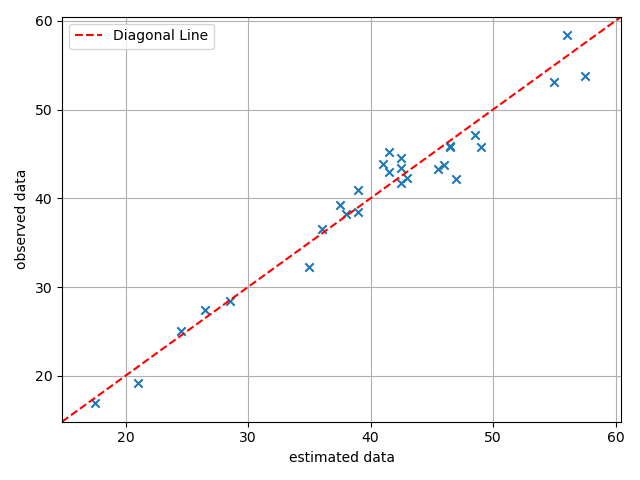}
         \caption{Truck count data}
     \end{subfigure}
    \caption{DODE for the TSMO network}
    \label{fig: DODE}
\end{figure}

\subsubsection{Training of reinforcement learning}
The training processes of PPO, SAC, and TransRL in both incident scenarios are compared in Figure \ref{fig: training on TSMO network}. In incident scenario A, compared with SAC, PPO converged faster and ended with higher returns. However, in incident scenario B, SAC converged faster than PPO, and PPO could not converge after 1000 episodes. This suggests none of the two model-free RLs dominates with the occurrence of unexpected incidents. On the other hand, TransRL converged faster and reached higher returns than PPO and SAC. Besides, TransRL in incident scenario A took more episodes to converge than TransRL in incident scenario B. This may be due to incident A impacting the system dynamics more significantly.

\begin{figure}[H]
    \centering
     \begin{subfigure}[b]{0.49\textwidth}
         \centering
         \includegraphics[width=\textwidth]{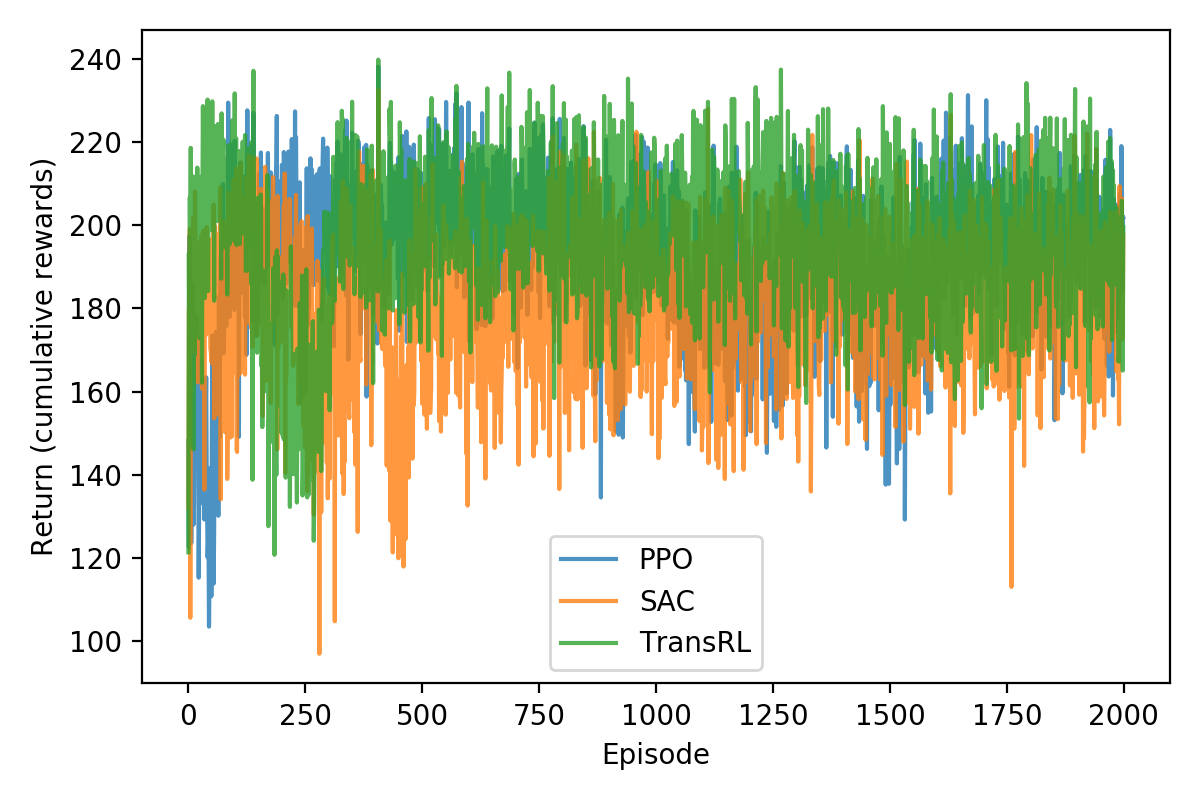}
         \caption{Incident scenario A}
     \end{subfigure}
     \begin{subfigure}[b]{0.49\textwidth}
         \centering
         \includegraphics[width=\textwidth]{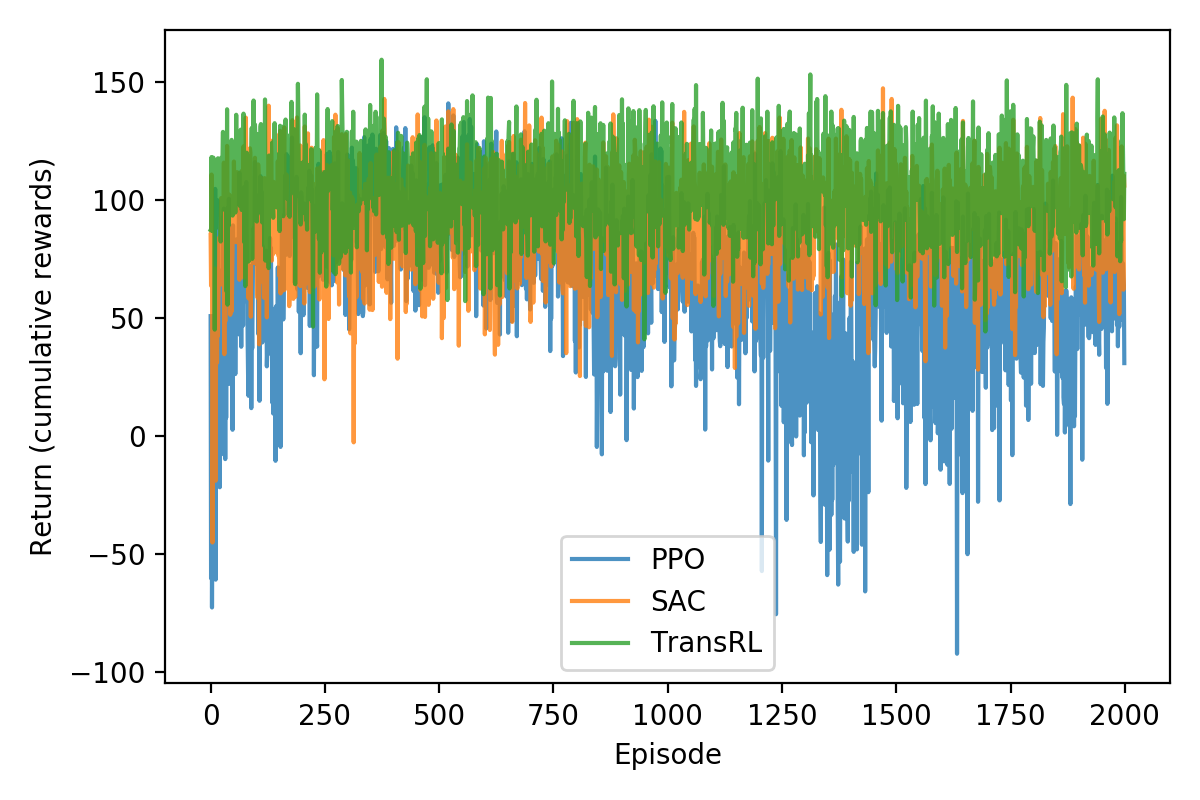}
         \caption{Incident scenario B}
     \end{subfigure}
    \caption{Training on the TSMO network}
    \label{fig: training on TSMO network}
\end{figure}

\subsubsection{Control performances of online tests}
The trained RL and pre-DSO were tested for 100 episodes with uncertain demands in the two incident scenarios. The total travel time is summarized in Table \ref{tab: TTT of TSMO net} and plotted in Figure \ref{fig: TTT on TSMO network}. The comparison between the model-based method (pre-DSO) and the two model-free RLs (i.e., PPO and SAC) is ambiguous. In incident scenario A, PPO and SAC led to lower total travel time than pre-DSO, and pre-DSO even increased total travel time compared with the no-control case. In incident scenario B,
pre-DSO outperformed PPO and SAC. This suggests that, under certain circumstances, incidents change the system dynamics so significantly that the model-based method may not improve the system. In contrast, TransRL outperformed pre-DSO, PPO, and SAC in both incident scenarios. This is interesting. The performance of TransRL in incident scenario A indicates that, even the model mismatch is non-trivial, TransRL is able to uncover a good policy after exploration and learning.

Table \ref{tab: CTTRaR of TSMO net} summarizes the reliability results. The reliability of model-free RL and pre-DSO is mixed. While PPO is more reliable than pre-DSO in incident scenario A, pre-DSO is more reliable than PPO in incident scenario B. In contrast, TransRL is the most reliable in both incident scenarios.

\begin{table}[H]
\centering
\caption{Total travel time on the TSMO network during 100 test episodes}
\label{tab: TTT of TSMO net}
\begin{tabular}{@{}lllll@{}}
\toprule
Scenario   & \multicolumn{2}{c}{Incident A} & \multicolumn{2}{c}{Incident B}  \\ \midrule
Total travel time & Average        & SD       & Average          & SD             \\ \midrule
UE (no control)    & 129028.95         & 1932.23 & 141198.15         & 3058.50 \\
pre-DSO (model-based)  & 135113.66          & 2066.20 & \underline{140529.34}          & 3054.75\\
PPO (model-free RL)     & \underline{126579.58}        & 2161.90 & 145846.63         & 3536.11\\
SAC (model-free RL)     & 128485.32          & 2717.12 & 141805.64          & 3569.62\\
TransRL (ours) & \textbf{126506.81} & 2002.43 & \textbf{140072.43} & 2627.40\\ \bottomrule
\end{tabular}
\end{table}

\begin{figure}[H]
    \centering
    \begin{subfigure}[b]{0.49\textwidth}
         \centering
         \includegraphics[width=\textwidth]{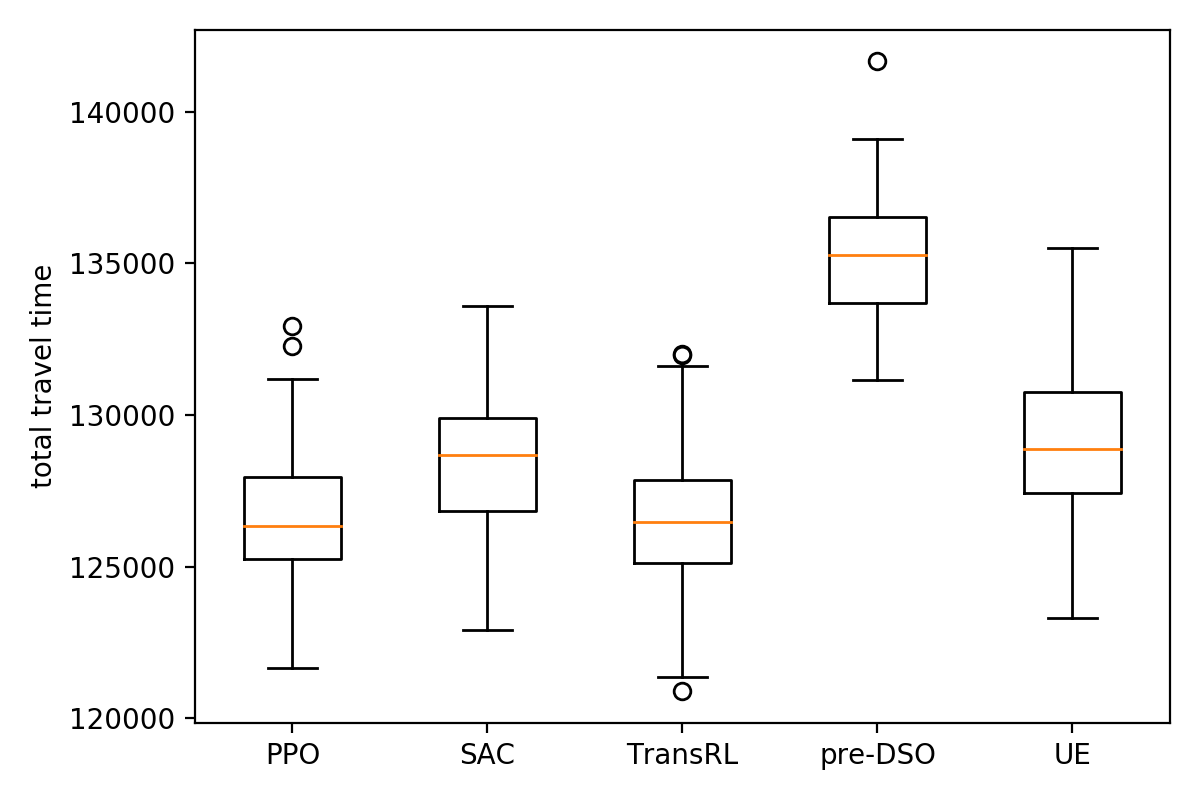}
         \caption{Incident scenario A}
     \end{subfigure}
         \begin{subfigure}[b]{0.49\textwidth}
         \centering
         \includegraphics[width=\textwidth]{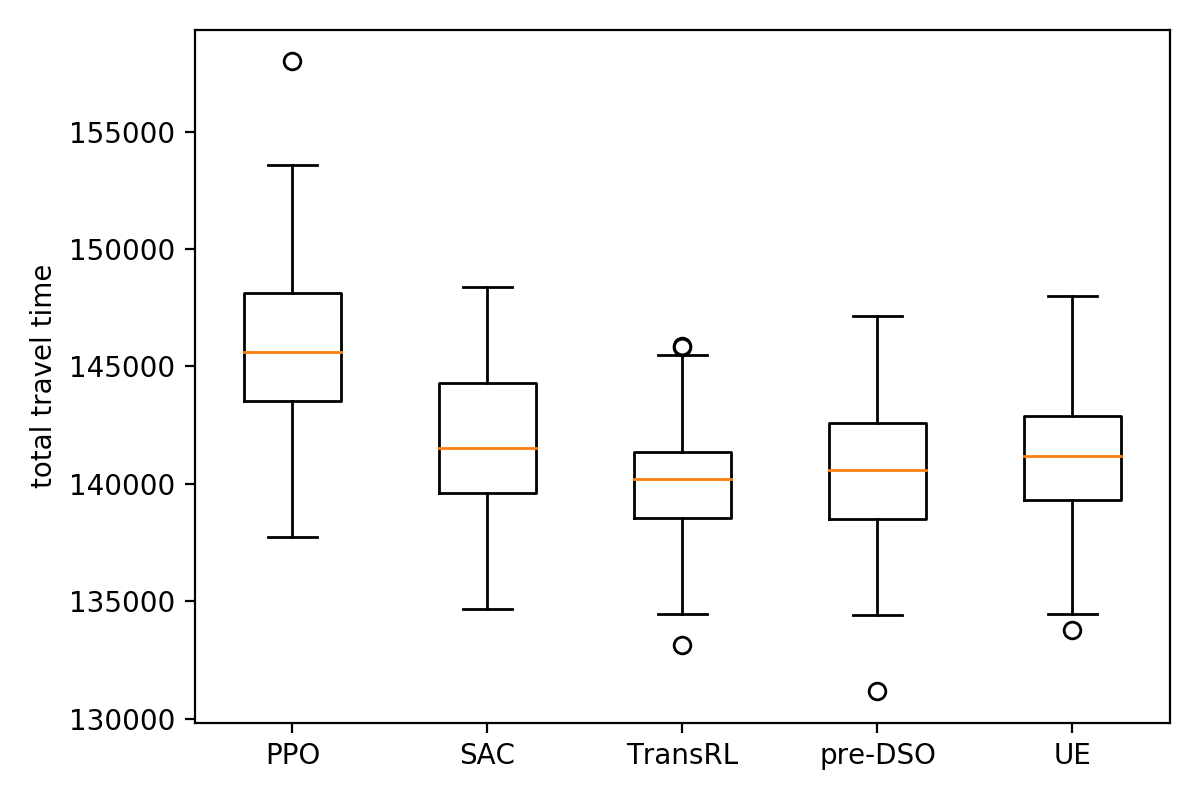}
         \caption{Incident scenario B}
     \end{subfigure}
    \caption{Total travel time on the TSMO network during 100 test episodes}
    \label{fig: TTT on TSMO network}
\end{figure}

\begin{table}[H]
\centering
\caption{$\text{CTTRaR}_{x}$ on the TSMO network during 100 test episodes. The quantile $x$ is set to be 10\%, which means the top 10\% worst episodes.}
\label{tab: CTTRaR of TSMO net}
\begin{tabular}{@{}lll@{}}
\toprule
   & Incident A & Incident B \\ \midrule
   & $\text{CTTRaR}_{x}$ & $\text{CTTRaR}_{x}$ \\ \midrule
pre-DSO (model-based) &  -6819.32 &  \underline{117.93}   \\
PPO (model-free RL)     & \underline{2871.87} & -3923.66 \\
SAC (model-free RL)    &  -1466.95 & -2614.68   \\
TransRL (ours) &  \textbf{3216.34} & \textbf{1506.50} \\ \bottomrule
\end{tabular}
\end{table}

\section{Conclusion}
\label{sec: conclusion}
This paper studies real-time system optimal traffic routing problems in general and sizable transportation networks with uncertainties. A small portion of vehicles are assumed to follow routing guidances by a centralized control algorithm. The objective of the control algorithm is to minimize the total expected travel time of all vehicles within the transportation network. We develop a novel RL model guided by a teacher policy that is derived directly from transportation domain models. Considering the realism of RL algorithms in real-world traffic operation, realistic testing scenarios are developed to test RL algorithms where the actual demands are stochastic and unknown, and there are model mismatches between the accessible traffic models that offline train RL models and the actual and unknown system dynamics testing RL models.

To incorporate the prior knowledge of the traffic models into RL, we proposed a novel RL framework TransRL. The reward of TransRL is composed of 1) a reward from the environment, and 2) a penalty reward that measures the distance between the current policy and the teacher policy deriving from the traffic model-based policy. An unreliability parameter is proposed to tune how much TransRL is concentrated on the traffic model-based policy. The experiments show that TransRL consistently outperforms the traffic model-based method and model-free RLs in reducing total travel time. Moreover, the actions of TransRL are more interpretable and reliable than model-free RLs.

We compare a traffic model-based RL with model-free RLs under different levels of demand uncertainties and model mismatches. The experiments suggest the traffic model-based RL, namely TransRL, performs well when the demand uncertainty level is low or medium and the model mismatch is mildlysignificant. Model-free RLs are agnostic to uncertain demands and system dynamics, so their performances are consistent along different demand uncertainties and model mismatches.


\section*{Acknowledgement}
This research is supported by US Department of Transportation Federal Highway Administration's Exploratory Advanced Research Award 693JJ321C000013. The contents of this paper reflect the views of the authors only, who are responsible for the facts and the accuracy of the information presented herein.

\bibliographystyle{plainnat}
\bibliography{ref}

\end{document}